\documentclass{article}

\usepackage{microtype}
\usepackage{graphicx}
\usepackage{subcaption}
\usepackage{adjustbox}
\usepackage{booktabs} % for professional tables

\usepackage{hyperref}

\usepackage[preprint]{icml2026}

\usepackage{amsmath}
\usepackage{amssymb}
\usepackage{mathtools}
\usepackage{amsthm}
\usepackage{multirow}

\usepackage[capitalize,noabbrev]{cleveref}

\theoremstyle{plain}
\newtheorem{theorem}{Theorem}[section]

\theoremstyle{definition}
\newtheorem{definition}[theorem]{Definition}
\newtheorem{assumption}[theorem]{Assumption}
\theoremstyle{remark}

\usepackage[textsize=tiny]{todonotes}

\icmltitlerunning{Controlled-Minimax: Stabilizing Adversarial Optimization via Control Theory}

\begin{document}

\twocolumn[
  \icmltitle{Controlled-Minimax: Stabilizing Adversarial Optimization via Control Theory}
  % It is OKAY to include author information, even for blind submissions: the
  % style file will automatically remove it for you unless you've provided
  % the [accepted] option to the icml2026 package.

  % List of affiliations: The first argument should be a (short) identifier you
  % will use later to specify author affiliations Academic affiliations
  % should list Department, University, City, Region, Country Industry
  % affiliations should list Company, City, Region, Country

  % You can specify symbols, otherwise they are numbered in order. Ideally, you
  % should not use this facility. Affiliations will be numbered in order of
  % appearance and this is the preferred way.
  \icmlsetsymbol{equal}{*}

  \begin{icmlauthorlist}
    \icmlauthor{Tianjiao Luo}{thu}
    \icmlauthor{Jianfei Chen}{thu}
    \icmlauthor{Jun Zhu}{thu}
    % \icmlauthor{Firstname4 Lastname4}{sch}
    % \icmlauthor{Firstname5 Lastname5}{yyy}
    % \icmlauthor{Firstname6 Lastname6}{sch,yyy,comp}
    % \icmlauthor{Firstname7 Lastname7}{comp}
    %\icmlauthor{}{sch}
    % \icmlauthor{Firstname8 Lastname8}{sch}
    % \icmlauthor{Firstname8 Lastname8}{yyy,comp}
    %\icmlauthor{}{sch}
    %\icmlauthor{}{sch}
  \end{icmlauthorlist}

  \icmlaffiliation{thu}{Department of Computer Science and Technology, Tsinghua University, Beijing, China}
  % \icmlaffiliation{comp}{Company Name, Location, Country}
  % \icmlaffiliation{sch}{School of ZZZ, Institute of WWW, Location, Country}

  \icmlcorrespondingauthor{Jun Zhu}{dcszj@mail.tsinghua.edu.cn}
  % \icmlcorrespondingauthor{Firstname2 Lastname2}{first2.last2@www.uk}

  % You may provide any keywords that you find helpful for describing your
  % paper; these are used to populate the "keywords" metadata in the PDF but
  % will not be shown in the document
  \icmlkeywords{Machine Learning, ICML}

  \vskip 0.3in
]

% this must go after the closing bracket ] following \twocolumn[ ...

% This command actually creates the footnote in the first column listing the
% affiliations and the copyright notice. The command takes one argument, which
% is text to display at the start of the footnote. The \icmlEqualContribution
% command is standard text for equal contribution. Remove it (just {}) if you
% do not need this facility.

% Use ONE of the following lines. DO NOT remove the command.
% If you have no special notice, KEEP empty braces:
\printAffiliationsAndNotice{}  % no special notice (required even if empty)
% Or, if applicable, use the standard equal contribution text:
% \printAffiliationsAndNotice{\icmlEqualContribution}

\begin{abstract}
  Minimax optimization is fundamental to adversarial learning yet suffers from severe stability challenges, often manifesting as persistent limit cycles or divergence. This paper investigates these instabilities through the lens of continuous-time dynamical systems. By modeling training dynamics as a second-order Ordinary Differential Equation (ODE), we rigorously identify the root cause of non-convergence: the problematic eigenstructure of the loss function's Hessian near equilibrium points. To address this, we introduce Controlled-Minimax, a novel stabilization framework that reformulates the optimization trajectory as a controllable physical system. Unlike heuristic regularization techniques, our approach derives a specific feedback controller that provides theoretical guarantees of local exponential stability, regardless of the underlying Hessian spectrum. Consequently, our method stabilizes general minimax problems where traditional gradient descent-ascent methods fail. Extensive experiments on pathological benchmarks including the Dirac-GAN, 2D Gaussian mixtures, and Swiss Roll datasets, as well as high-resolution image synthesis with StyleGAN2-ada, demonstrate that Controlled-Minimax effectively suppresses oscillatory behavior while ensuring fast and robust convergence.
\end{abstract}

\section{Introduction}
Minimax optimization is a cornerstone of modern machine learning, transcending traditional single-objective minimization tasks to address problems characterized by competing interests. This paradigm frames learning as a competition between two agents: a minimizing player controlling parameters $\mathbf{\theta}$ and a maximizing player controlling $\mathbf{\phi}$, coupled through a shared objective function $L(\mathbf{\theta}, \mathbf{\phi})$. The goal is to find a saddle point $(\mathbf{\theta}^*, \mathbf{\phi}^*)$ that solves the problem $\min_{\mathbf{\theta}} \max_{\mathbf{\phi}} L(\mathbf{\theta}, \mathbf{\phi})$. This formulation has proven to be incredibly powerful and versatile, underpinning the success of Generative Adversarial Networks (GANs) in synthesizing high-fidelity data \citep{goodfellow2020generative}, enhancing model robustness through adversarial training \citep{madry2017towards}, and enabling complex decision-making in multi-agent reinforcement learning \citep{busoniu2008comprehensive}.

However, efficiently solving these minimax problems in the context of deep learning remains a formidable challenge. Unlike classical convex-concave settings, the optimization landscapes parameterized by deep neural networks are highly non-convex and non-concave, leading to complex local geometries. Standard approaches, most notably Gradient Descent-Ascent (GDA) \citep{bottou2010large}, often fail to converge to a stable Nash equilibrium in such regimes. Instead of simply descending a potential surface, the optimization dynamics frequently exhibit rotational behaviors caused by the interaction between the two players. Consequently, training trajectories are prone to persistent oscillations, limit cycles, or even divergence, which severely hampers the stability and generalization capability of the resulting models.

To address the inherent instability of minimax optimization, existing literature has predominantly explored two directions: refining the optimization dynamics and imposing structural regularizations. The former category attempts to correct the rotational behavior of GDA by incorporating lookahead mechanisms \citep{korpelevich1976extragradient, mertikopoulos2018optimistic, gidel2019negative}, multi-step updates \citep{goodfellow2020generative}, or timescale separation between the players \citep{heusel2017gans}. The latter seeks to smooth the optimization landscape by enforcing Lipschitz constraints or adding gradient penalties \citep{gulrajani2017improved, mescheder2018training, miyato2018spectral}. Nevertheless, rigorous theoretical guarantees for these methods remain elusive, particularly in the non-convex non-concave settings typical of neural networks in deep learning. 

While local convergence can be proven under strict assumptions—such as local convexity or linearity \citep{nagarajan2017gradient, heusel2017gans}, global convergence properties are often neither guaranteed nor understood \citep{mescheder2017numerics, daskalakis2018limit, farnia2020gans}. Furthermore, regularization-based methods tend to introduce inductive biases that hamper model expressivity \citep{mescheder2018training, arjovsky2017wasserstein}, while sophisticated dynamic adjustments frequently incur high computational overheads without ensuring stability in general cases. Consequently, developing a solver that is both theoretically grounded and practically robust remains an open challenge.

In this paper, we depart from the traditional view of designing better discrete optimizers and instead approach the instability problem through the lens of continuous-time control theory. We model the minimax training dynamics as a second-order Ordinary Differential Equation (ODE), allowing us to rigorously diagnose the root cause of persistent oscillations: the specific eigenstructure of the loss function’s Hessian, which inherently precludes convergence in standard settings. Based on this insight, we propose \textbf{Controlled-Minimax}, a novel framework that treats the optimization trajectory as a controllable dynamical system. Unlike heuristic regularizers that passively penalize model weights, our method actively injects a designed feedback controller into the dynamics. This controller forces the system’s energy to dissipate, providing a theoretical guarantee of \emph{local exponential stability} regardless of the underlying Hessian structure. We validate our approach on tasks ranging from the pathological Dirac-GAN, 2D Gaussian and Swiss Roll to high-resolution image generation with StyleGAN2-ada, demonstrating that Controlled-Minimax effectively eliminates oscillatory behavior and ensures robust convergence where traditional methods fail. 

Our contribution can be summarized as:
\begin{itemize}
    \item We bridge the gap between discrete optimization algorithms and continuous dynamical systems by transforming the adversarial training process into a second-order Ordinary Differential Equation (ODE). This shift in perspective allows us to apply classical tools from control theory to the typically unstable dynamics of GAN training.
    \item We identify that the instability in adversarial learning stems from specific pathological eigenstructures of the Hessian that standard gradient fields cannot correct.
    \item We introduce a novel stabilization mechanism called the Controlled-Minimax that views optimization as a control problem. By designing a specific feedback controller on gradient, we prove that our method achieves local exponential stability for general minimax problems, offering a rigorous guarantee that is absent in heuristic regularization methods.
    \item We demonstrate the effectiveness of our approach across a spectrum of tasks, ranging from fundamental benchmarks (Dirac-GAN, 2D Gaussian, Swiss Roll) to high-dimensional training with StyleGAN2-ada. Our Controlled-Minimax consistently eliminates oscillations and accelerates convergence, validating its practical effectiveness alongside its theoretical soundness.
\end{itemize}

\subsection{Related Work}

\paragraph{Convergence Analysis and Static Bounds}
Many works on minimax optimization relies on strong assumptions regarding the curvature of the objective function. Convergence rates have been established for settings such as convex-concave \citep{nemirovski2009robust}, strongly-convex-strongly-concave \citep{palaniappan2016stochastic}, and specific nonconvex-nonconcave case \citep{yang2020global}. Within these structured landscapes, \citet{daskalakis2018limit} and \citet{gidel2019negative} provide rigorous analysis of Gradient Descent-Ascent (GDA), though requiring specific step-size conditions or negative momentum. Similarly, \citet{mertikopoulos2018optimistic} explore extra-gradient methods in coherent games, while \citet{hsieh2019finding} analyze convergence in non-convex non-concave settings. However, these guarantees falter in deep learning because they depend on rigid curvature profiles or specific timescale separations \citep{mescheder2017numerics, heusel2017gans}. Furthermore, treating optimization steps as isolated updates, ignoring the accumulated trajectory and inertia of the parameters.

\paragraph{Heuristic Regularization and Algorithmic Variants}
 To address the rotational dynamics common in zero-sum games, \citet{daskalakis2018limit} propose the Optimistic Mirror Descent, which effectively utilizes negative momentum to cancel out interaction terms, a technique further analyzed by \citet{mokhtari2020unified} through the lens of proximal point approximations. Regularization strategies have also been pivotal. \citet{nagarajan2017gradient} analyze the local convergence properties of gradient penalties, introducing terms that stabilize the Jacobian of the vector field near equilibrium points. However, a key limitation of these approaches is that their convergence guarantees often rely on local linearity or strong operator assumptions. 

\paragraph{Control Theory and Dynamical Systems}
\citet{xu2020understanding} and \citet{luo2023stabilizing} have attempted to actively control training dynamics using feedback loops. \citet{chavdarova2019reducing} further explored how differential equation solvers can reduce noise in game dynamics. Furthermore, \citet{li2019convergence} and \citet{su2016differential} have drawn connections between Nesterov acceleration and differential equations, though primarily in single-objective settings. While these works acknowledge the dynamic nature of learning, they often rely on complex auxiliary loss terms or heavy modifications to the standard gradient vector field. This can make them computationally expensive or difficult to integrate with standard optimizers.

% \begin{figure*}[t]
%     \centering
%     \begin{minipage}{0.48\textwidth}
%         \centering
%         \includegraphics[width=\linewidth]{img/phase_portrait.pdf}
%         % \caption{Phase Portrait Analysis}
%         \label{fig:phase_portrait}
%     \end{minipage}\hfill
%     \begin{minipage}{0.48\textwidth}
%         \centering
%         \includegraphics[width=\linewidth]{img/time_series.pdf}
%         % \caption{Time Series Evolution}
%         \label{fig:time_series}
%     \end{minipage}
%     \caption{Combined visualization showing the phase portrait (left) and time series (right) for the controlled system.}
%     \label{fig:combined_plots}
% \end{figure*}

\begin{figure}[t]
    \centering
    % First image (Top)
    \includegraphics[width=\linewidth]{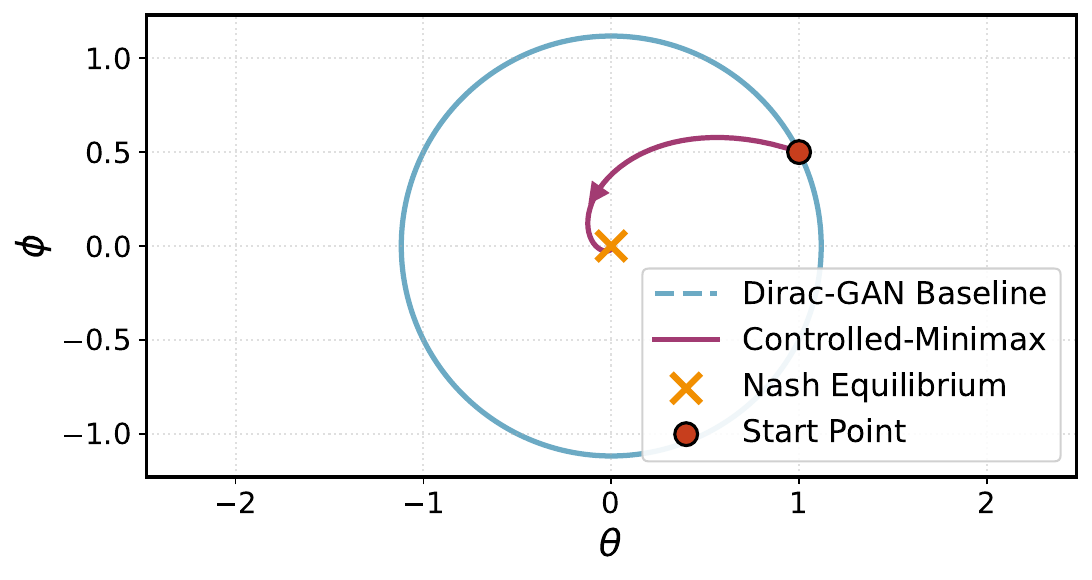}
    
    % Small vertical gap between images
    \par\medskip
    
    % Second image (Bottom)
    \includegraphics[width=\linewidth]{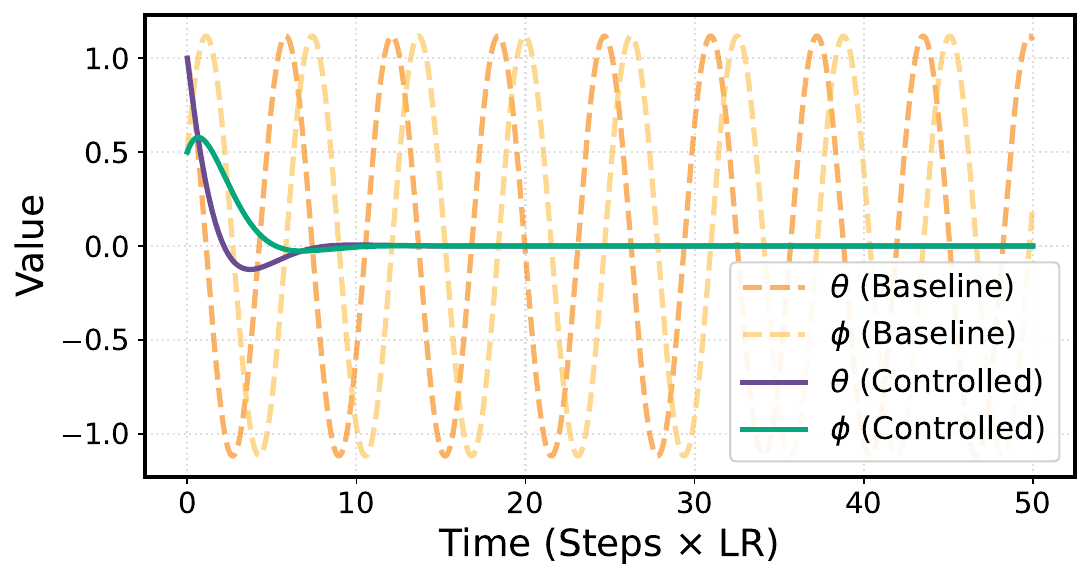}

    \caption{Convergence analysis on Dirac-GAN. \textbf{Top:} Phase space trajectories showing the optimization path. The baseline gets stuck in a limit cycle, while our Controlled Minimax spirals inward to the equilibrium. \textbf{Bottom:} Evolution of parameters over time. The baseline parameters oscillate indefinitely with constant amplitude, whereas our method forces the parameter values to decay to zero.}
    
    \label{fig:combined_plots}
\end{figure}

\section{Preliminary}
We collect here standard notions and results on local stability of
ordinary differential equations (ODEs) that will be used in the
analysis of our second-order dynamics. Throughout, we consider
autonomous systems of the form
\begin{equation}
    \dot{x} = f(x),
    \qquad x \in \mathbb{R}^d,
    \label{eq:general-ode}
\end{equation}
where $f : \mathbb{R}^d \to \mathbb{R}^d$ is locally Lipschitz.

\begin{definition}[Equilibrium]
    A point $x^\star \in \mathbb{R}^d$ is an \emph{equilibrium} of system
\eqref{eq:general-ode} if $f(x^\star) = 0$. We recall the standard
stability notions.
\end{definition}

\begin{definition}
\label{def:stability}
System \eqref{eq:general-ode} with equilibrium  $x^\star$ is 
\begin{enumerate}
    \item \emph{(Lyapunov) stable} if for every
    $\varepsilon > 0$ there exists $\delta > 0$ such that
    $\|x(0) - x^\star\| < \delta$ implies
    $\|x(t) - x^\star\| < \varepsilon$ for all $t \ge 0$.

    \item \emph{asymptotically stable} if it is stable and,
    in addition, there exists $\delta' > 0$ such that
    $\|x(0) - x^\star\| < \delta'$ implies
    $\lim_{t \to \infty} x(t) = x^\star$.

    \item \emph{exponentially stable} if there exist
    constants $c > 0$, $\alpha > 0$, and $r > 0$ such that
    $\|x(0) - x^\star\| < r$ implies
    \[
        \|x(t) - x^\star\|
        \le c\,e^{-\alpha t}\,\|x(0) - x^\star\|
        \quad \text{for all } t \ge 0.
    \]
\end{enumerate}
\end{definition}

We recall the classical local linearization result, which relates the
stability of an equilibrium of \eqref{eq:general-ode} to the spectrum of
the Jacobian of $f$ at that equilibrium.

\begin{theorem}[Local Linearization]
\label{thm:local-linearization}
Let $x^\star$ be an equilibrium of
$
    \dot{x} = f(x),
$
and Jacobian $A \coloneqq Df(x^\star)$.

\begin{enumerate}
    \item If all eigenvalues of $A$ satisfy $\Re(\lambda) < 0$, then
    $x^\star$ is \emph{locally asymptotically stable}.

    \item If at least one eigenvalue of $A$ satisfies $\Re(\lambda) > 0$,
    then $x^\star$ is \emph{unstable}.

    \item If all eigenvalues of $A$ satisfy $\Re(\lambda) \le 0$ and for every eigenvalue with $\Re(\lambda) = 0$ the algebraic multiplicity equals the geometric multiplicity, then $x^\star$ is \emph{marginally stable}.
\end{enumerate}
\end{theorem}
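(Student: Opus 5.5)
The plan is to reduce everything to the linearized system around $x^\star$ and control the nonlinear remainder. For this I assume $f$ is $C^1$ near $x^\star$, which is slightly more than the standing local Lipschitz hypothesis but is implicit in the statement, since it refers to $Df(x^\star)$. Set $y = x - x^\star$ and write
\[
\dot y = Ay + g(y), \qquad g(y) = f(x^\star + y) - Ay .
\]
Differentiability gives $\|g(y)\|/\|y\| \to 0$ as $y \to 0$. Parts 1 and 2 are then Lyapunov's indirect method. Part 3 is a statement about the linear part, and I will argue it only there, for the reason given at the end.

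\textbf{Part 1.} Since every eigenvalue of $A$ has negative real part, $\|e^{At}\| \le K e^{-\mu t}$ for some $K, \mu > 0$. Hence
\[
P = \int_0^\infty e^{A^\top t} e^{At}\,dt
\]
is well defined, symmetric positive definite, and solves $A^\top P + PA = -I$. Take $V(y) = y^\top P y$. Along trajectories,
\[
\dot V = -\|y\|^2 + 2 y^\top P g(y).
\]
Choose $r > 0$ so that $\|g(y)\| \le \|y\|/(4\|P\|)$ whenever $\|y\| < r$. Then $\dot V \le -\tfrac12 \|y\|^2 \le -\tfrac{1}{2\lambda_{\max}(P)}\, V$ on that ball. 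Sublevel sets of $V$ inside the ball are forward invariant, so Gr\"onwall and the equivalence of $V$ with $\|y\|^2$ give
\[
\|y(t)\| \le \sqrt{\lambda_{\max}(P)/\lambda_{\min}(P)}\; e^{-t/(4\lambda_{\max}(P))}\, \|y(0)\| .
\]
This is in fact exponential stability in the sense of Definition~\ref{def:stability}, which is stronger than the claimed asymptotic stability.

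\textbf{Part 2.} I would use Chetaev's theorem. By a real change of coordinates, put $A$ in the block form $\mathrm{diag}(A_+, A_-)$, where $A_+$ collects the eigenvalues with $\Re\lambda > 0$ and $A_-$ the rest. Let $\sigma$ be half the smallest real part in the spectrum of $A_+$.
\begin{itemize}
\item Since $-(A_+ - \sigma I)$ is Hurwitz, the Lyapunov equation gives $P_+ \succ 0$ with $(A_+ - \sigma I)^\top P_+ + P_+ (A_+ - \sigma I) = I$.
\item Since $A_- - \sigma I$ is Hurwitz, it gives $P_- \succ 0$ with $(A_- - \sigma I)^\top P_- + P_- (A_- - \sigma I) = -I$.
\end{itemize}
Set $V = y_+^\top P_+ y_+ - y_-^\top P_- y_-$. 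Then
\[
\dot V = 2\sigma V + \|y\|^2 + O\!\left(\|y\|\,\|g(y)\|\right).
\]
On the cone $\{V > 0\}$ intersected with a small ball, this gives $\dot V \ge \tfrac12 \|y\|^2 > 0$. The cone contains points arbitrarily close to $0$, so Chetaev's argument shows trajectories starting there must leave the ball. Hence $x^\star$ is unstable.

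\textbf{Part 3 (main obstacle).} For the linear system $\dot y = Ay$, I would use the real Jordan form. Eigenvalues with $\Re\lambda < 0$ contribute decaying terms. Eigenvalues on the imaginary axis with equal algebraic and geometric multiplicity have trivial Jordan blocks, so they contribute only bounded oscillatory terms $e^{i\omega t}$ with no polynomial growth. Thus $\sup_{t \ge 0} \|e^{At}\| < \infty$, which is Lyapunov stability of the linearization.

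Extending this to the nonlinear system $\dot x = f(x)$ is the step I expect to be genuinely blocked. The implication is false in general: $\dot x = x^3$ has $A = 0$, which is semisimple with the single eigenvalue on the axis, yet $0$ is unstable. So I would prove item 3 as a statement about the linearized dynamics. Any nonlinear conclusion in this critical case would require additional structure, such as a center-manifold reduction or an explicit Lyapunov function for the specific minimax dynamics, and I would expect the paper's later analysis to rely only on item 1.
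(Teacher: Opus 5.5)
The paper gives no proof of this theorem; it is quoted as a classical result, so there is no argument of the authors' to compare yours against. Your items 1 and 2 are the standard indirect-method and Chetaev arguments, and the computations are right. $P=\int_0^\infty e^{A^\top t}e^{At}\,dt$ solves $A^\top P+PA=-I$. In the split coordinates the two Lyapunov equations give $A_\pm^\top P_\pm+P_\pm A_\pm=\pm I+2\sigma P_\pm$, hence exactly $\dot V=2\sigma V+\|y\|^2+O(\|y\|\,\|g(y)\|)$. Both arguments use only $g(y)=o(\|y\|)$, i.e.\ differentiability of $f$ at $x^\star$, plus the standing local Lipschitz hypothesis for well-posedness. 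So your extra $C^1$ assumption can be dropped.

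Your handling of item 3 is also correct, and it is the most valuable part of the proposal. As stated for the nonlinear system, the item is false. Your example $\dot x=x^3$ refutes Lyapunov stability: $A=0$ is semisimple, yet $x(t)=x_0/\sqrt{1-2x_0^2t}$ blows up in finite time. What survives is exactly what your Jordan-form argument proves, namely $\sup_{t\ge 0}\|e^{At}\|<\infty$ for the linearization.

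Your closing guess about the paper is too optimistic, though. The ``marginally stable'' case of Theorem~\ref{thm:stability_ori} is item 3 applied to $\mathbf M$, so it inherits the same defect. Moreover, every $(\theta,\phi,0,0)$ is an equilibrium of the second-order system. Hence $\mathbf J(\mathbf u^*)$ always has the eigenvalue $0$ with multiplicity at least $n_\theta+n_\phi$. Item 1 therefore never applies to the full state, and both of the paper's stable cases fall in the critical regime your counterexample addresses.
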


\section{Minimax Dynamics as a Second Order ODE}
\label{sec:setup}

% \subsection{Gradient Flow and Second-Order Dynamics}
% \label{subsec:second-order-from-gf}
We consider a two-player zero-sum minimax problem 
\begin{equation}
    \min_{\theta \in \mathbb{R}^m} \; \max_{\phi \in \mathbb{R}^n} 
    L(\theta, \phi),
    \label{eq:minimax-problem}
\end{equation}
where $L : \mathbb{R}^m \times \mathbb{R}^n \to \mathbb{R}$ is a continuously differentiable payoff function. The vector $\theta \in \mathbb{R}^m$ denotes the strategy of the minimizing player, and $\phi \in \mathbb{R}^n$ denotes the strategy of the maximizing player.
A standard continuous-time model for gradient
descent--ascent is the gradient flow
\begin{equation}
   \dot{\theta} = -\nabla_{\theta} L(\theta,\phi),
    \qquad
    \dot{\phi} = \nabla_{\phi} L(\theta,\phi),
    \label{eq:gf-theta-phi}
\end{equation}
where, with a slight abuse of notation, we write \(\theta,\phi\) instead of
\(\theta(t),\phi(t)\) to denote the time-dependent trajectories.

% We collect the player variables into a single state vector
% \begin{equation}
%     z \coloneqq
%     \begin{bmatrix}
%         \theta \\[2pt]
%         \phi
%     \end{bmatrix}
%     \in \mathbb{R}^{m+n},
%     \qquad
%     v \coloneqq \frac{dz}{dt}
%     =
%     \begin{bmatrix}
%         \dfrac{d\theta}{dt} \\[4pt]
%         \dfrac{d\phi}{dt}
%     \end{bmatrix},
%     \label{eq:z-v-def}
% \end{equation}
% so that \eqref{eq:gf-theta-phi} can be written compactly as
% \begin{equation}
%     \frac{dz}{dt} = v,
%     \qquad
%     v =
%     \begin{bmatrix}
%         -\nabla_{\theta} L(\theta,\phi) \\[2pt]
%         \nabla_{\phi} L(\theta,\phi)
%     \end{bmatrix}.
%     \label{eq:gf-compact}
% \end{equation}

To obtain a second-order ODE, we differentiate
\eqref{eq:gf-theta-phi} with respect to time. Using the chain rule, we
have
\begin{equation}
    \ddot\theta
    = -\frac{d}{dt}\,\nabla_{\theta} L(\theta,\phi),
    \qquad
    \ddot\phi
    = \frac{d}{dt}\,\nabla_{\phi} L(\theta,\phi).
    \label{eq:theta-phi-second-deriv}
\end{equation}
Assuming that \(L\) is twice continuously differentiable, the right-hand
sides of \eqref{eq:theta-phi-second-deriv} can be written as
\begin{equation}
\left\{
\begin{aligned}
      \ddot\theta
    &=
    -\nabla^2_{\theta\theta} L(\theta,\phi)\,\dot{\theta}
    -\nabla^2_{\theta\phi} L(\theta,\phi)\,\dot{\phi},
    \\
    \ddot\phi
    &=
    \nabla^2_{\phi\theta} L(\theta,\phi)\,\dot{\theta}
    +\nabla^2_{\phi\phi} L(\theta,\phi)\,\dot{\phi}.
\end{aligned}
\right.
\label{eq:theta-phi-second-order-system}
\end{equation}
where
\(\nabla^2_{\theta\theta} L, \nabla^2_{\theta\phi} L,
\nabla^2_{\phi\theta} L, \nabla^2_{\phi\phi} L\) denote the corresponding
second derivatives of \(L\). This formulation represents the uncontrolled second-order dynamics. In Section \ref{sec:ori_stability}, we provide a stability analysis of this original system, followed by the design of a stabilizing structure-aware damping controller in Section \ref{sec:control}.

\subsection{Local Stability Analysis} \label{sec:ori_stability}

In this section, we analyze the local stability of the dynamical system derived in Equation \eqref{eq:theta-phi-second-order-system} and formally derive the conditions for three cases of stability in theorem \ref{thm:stability_ori}.

Let the parameter spaces be $\theta \in \mathbb{R}^{n_\theta}$ and $\phi \in \mathbb{R}^{n_\phi}$. To conserve space, we define the full state vector $\mathbf{u} \in \mathbb{R}^{2(n_\theta + n_\phi)}$ using transpose notation $
    \mathbf{u} = \left[ \theta^\top, \phi^\top, \dot{\theta}^\top, \dot{\phi}^\top \right]^\top.
$
Then the dynamics can be transformed to 
\begin{equation}
     \dot{\mathbf{u}} = 
    \begin{bmatrix}
    \dot{\theta} \\
    \dot{\phi} \\
    -\nabla^2_{\theta\theta} L(\theta, \phi) \dot{\theta} - \nabla^2_{\theta\phi} L(\theta, \phi) \dot{\phi} \\
    \nabla^2_{\phi\theta} L(\theta, \phi) \dot{\theta} + \nabla^2_{\phi\phi} L(\theta, \phi) \dot{\phi}
    \end{bmatrix}.
\end{equation}

To determine the long-term convergence of the training process, we seek fixed points where the system ceases to evolve. An equilibrium is defined by the condition $\dot{\mathbf{u}} = \mathbf{0}$, requiring both that the parameter updates halt and that the loss gradients vanish. We formalize this stationary state as follows.

\begin{definition}[Zero-Velocity Equilibrium]
An equilibrium point of the system is defined as a state $\mathbf{u}^*$ where the time derivatives of the parameters vanish. We denote this state as:
\begin{equation}
    \mathbf{u}^* = \left[ (\theta^*)^\top, (\phi^*)^\top, \mathbf{0}^\top, \mathbf{0}^\top \right]^\top,
\end{equation}
where $\theta^*$ and $\phi^*$ are fixed points in the parameter space, and $\mathbf{0}$ represents the zero vector of appropriate dimension.
\end{definition}

To determine local stability, we examine the Jacobian matrix of $\mathcal{F}$ evaluated at $\mathbf{u}^*$. Computing the partial derivatives and noting that Hessian derivative terms vanish at zero velocity, we obtain the block-structured matrix:
\begin{equation}
    \mathbf{J}(\mathbf{u}^*) = 
    \begin{bmatrix}
    \mathbf{0} & \mathbf{0} & \mathbf{I} & \mathbf{0} \\
    \mathbf{0} & \mathbf{0} & \mathbf{0} & \mathbf{I} \\
    \mathbf{0} & \mathbf{0} & -\nabla^2_{\theta\theta} L & -\nabla^2_{\theta\phi} L \\
    \mathbf{0} & \mathbf{0} & \nabla^2_{\phi\theta} L & \nabla^2_{\phi\phi} L
    \end{bmatrix},
\end{equation}
where $L$ is evaluated at $(\theta^*, \phi^*)$. The stability is determined by the lower-right block, defined as the interaction matrix $\mathbf{M}$:
\begin{equation}
    \mathbf{M} = 
    \begin{bmatrix}
    -\nabla^2_{\theta\theta} L(\theta^*, \phi^*) & -\nabla^2_{\theta\phi} L(\theta^*, \phi^*) \\
    \nabla^2_{\phi\theta} L(\theta^*, \phi^*) & \nabla^2_{\phi\phi} L(\theta^*, \phi^*)
    \end{bmatrix}.
\end{equation}

\begin{theorem}[Local Stability Classification] \label{thm:stability_ori}
Let $\sigma(\mathbf{M})$ denote the spectrum of the interaction matrix $\mathbf{M}$. The local stability of the system \eqref{eq:theta-phi-second-order-system} around $\mathbf{u}^*$ is:

\begin{enumerate}
    \item  Asymptotically Stable if $\forall \lambda \in \sigma(\mathbf{M}), \text{Re}(\lambda) < 0$. Small perturbations in velocity decay exponentially.
    
    \item Lyapunov Stable (Marginally Stable) if $\forall \lambda \in \sigma(\mathbf{M}), \text{Re}(\lambda) \leq 0$, and for any eigenvalue with $\text{Re}(\lambda) = 0$ the algebraic multiplicity equals the geometric multiplicity. Velocity perturbations remain bounded.
    
    \item  Unstable if $\exists \lambda \in \sigma(\mathbf{M})$ such that $\text{Re}(\lambda) > 0$. Arbitrarily small perturbations will grow exponentially over time.
\end{enumerate}
\end{theorem}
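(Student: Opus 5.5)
The plan is to read off the spectrum of $\mathbf{J}(\mathbf{u}^*)$ from its block structure and then apply Theorem~\ref{thm:local-linearization}, with one correction. Every state of the form $[\theta^\top,\phi^\top,\mathbf{0}^\top,\mathbf{0}^\top]^\top$ is an equilibrium of the first-order system, because the velocity equations are homogeneous in $(\dot\theta,\dot\phi)$. So $\mathbf{u}^*$ is never isolated, and the linearization theorem cannot be applied verbatim to the full state. I will therefore work with the velocity component $\mathbf{v}=[\dot\theta^\top,\dot\phi^\top]^\top$, and with convergence of the position component to some nearby point of the equilibrium manifold. The statement's own glosses ("velocity perturbations decay/remain bounded/grow") indicate this is the intended meaning.

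\textbf{Step 1 (spectrum of $\mathbf{J}$).} Write
\[
\mathbf{J}(\mathbf{u}^*)=\begin{bmatrix}\mathbf{0}&\mathbf{I}\\ \mathbf{0}&\mathbf{M}\end{bmatrix},
\]
which is block upper triangular. Then $\det(\lambda\mathbf{I}-\mathbf{J})=\lambda^{n_\theta+n_\phi}\det(\lambda\mathbf{I}-\mathbf{M})$, so $\sigma(\mathbf{J})=\{0\}\cup\sigma(\mathbf{M})$. The zero eigenvalue, of multiplicity $n_\theta+n_\phi$, corresponds to the tangent directions of the equilibrium manifold $\{\mathbf{v}=0\}$. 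Consequently, case 3 follows immediately from part 2 of Theorem~\ref{thm:local-linearization}: an eigenvalue of $\mathbf{M}$ with positive real part is an eigenvalue of $\mathbf{J}$, so $\mathbf{u}^*$ is unstable. One can also see this directly: the eigenvector $w$ of $\mathbf{M}$ lifts to the eigenvector $(w/\lambda,\,w)$ of $\mathbf{J}$, whose velocity part grows like $e^{\Re(\lambda)t}$.

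\textbf{Step 2 (cases 1 and 2 via the velocity subsystem).} Write $z=(\theta,\phi)$. The exact dynamics are $\dot z=\mathbf{v}$ and $\dot{\mathbf{v}}=\mathbf{M}(z)\mathbf{v}$, where $\mathbf{M}(z)$ is $\mathbf{M}$ with the Hessian blocks evaluated at $z$.

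For case 1, assume all eigenvalues of $\mathbf{M}=\mathbf{M}(z^*)$ have negative real part. Solve the Lyapunov equation $\mathbf{M}^\top P+P\mathbf{M}=-\mathbf{I}$ for some $P\succ0$. Since $L\in C^2$, $\mathbf{M}(z)$ is continuous, so $\mathbf{M}(z)^\top P+P\mathbf{M}(z)\preceq-\tfrac12\mathbf{I}$ on a ball $\|z-z^*\|\le\rho$. Along trajectories that stay in this ball, $V=\mathbf{v}^\top P\mathbf{v}$ satisfies $\dot V\le-cV$, so $\|\mathbf{v}(t)\|\le Ce^{-ct/2}\|\mathbf{v}(0)\|$. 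It then remains to check that trajectories do stay in the ball. Integrating the velocity bound gives
\[
\|z(t)-z(0)\|\le\int_0^\infty\|\mathbf{v}\|\le (2C/c)\|\mathbf{v}(0)\|,
\]
so for small initial perturbations the trajectory never leaves the ball; this is a bootstrap/continuation argument. Hence $\mathbf{v}\to0$ exponentially and $z(t)$ converges to a nearby equilibrium. This gives Lyapunov stability of $\mathbf{u}^*$ and exponential decay of velocity perturbations, which is the precise sense of "asymptotically stable" here.

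For case 2, under the semisimplicity assumption on the imaginary-axis eigenvalues, the linear flow $e^{\mathbf{M}t}$ is bounded, so velocity perturbations remain bounded. This is part 3 of Theorem~\ref{thm:local-linearization} applied to $\dot{\mathbf{v}}=\mathbf{M}\mathbf{v}$ and read at the linearized level.

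\textbf{Main obstacle.} The hard part is being honest about the non-isolated equilibrium. Asymptotic stability of $\mathbf{u}^*$ in the strict sense of Definition~\ref{def:stability} fails: nearby initial states with nonzero velocity converge to a \emph{different} rest point. The integrability bound in Step 2 is what rescues a meaningful statement, giving stability of $\mathbf{u}^*$ plus exponential convergence of $\mathbf{v}$. I would state this clarification explicitly when writing the proof.

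For case 2, nonlinear terms can destroy marginal stability, so I would either restrict the claim to the linearization or add a remark to that effect. I would not attempt a full nonlinear center-manifold argument.
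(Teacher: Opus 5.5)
Your proposal is correct, and it departs from the paper exactly where the paper's argument is thin. The paper gives no separate proof. It computes the block-triangular $\mathbf{J}(\mathbf{u}^*)$, asserts that ``the stability is determined by the lower-right block'' $\mathbf{M}$, and implicitly appeals to Theorem~\ref{thm:local-linearization}, silently discarding the zero block.

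You share that first step, and for case 3 the conclusion is the same, since the instability half of linearization is unaffected by extra zero eigenvalues. For cases 1 and 2, however, you correctly note two problems. First, $0\in\sigma(\mathbf{J})$ always, so part 1 of Theorem~\ref{thm:local-linearization} never applies. Second, every $(\theta,\phi,\mathbf{0},\mathbf{0})$ is a rest point, so asymptotic stability of $\mathbf{u}^*$ in the sense of Definition~\ref{def:stability} is in fact false.

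Your replacement uses the exact structure $\dot{\mathbf{v}}=\mathbf{M}(z)\mathbf{v}$, which is linear in $\mathbf{v}$ with a continuous coefficient. Hence the Lyapunov function from $\mathbf{M}^\top P+P\mathbf{M}=-\mathbf{I}$ works for the nonlinear system on a ball, and the integrability bootstrap keeps $z$ in that ball. This yields a true statement from continuity of the Hessian alone: Lyapunov stability of $\mathbf{u}^*$, exponential decay of $\mathbf{v}$, and convergence of $(\theta,\phi)$ to a nearby rest point. The paper's one-line reading of $\sigma(\mathbf{M})$ cannot deliver this.

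Your caution in case 2 is also warranted, because part 3 of Theorem~\ref{thm:local-linearization} fails for nonlinear systems. For example, $L=-\theta^3/6-\phi^2$ gives $\mathbf{M}=\mathrm{diag}(0,-2)$ at the origin, yet $\ddot\theta=\theta\dot\theta$ with $\theta(0)=0$, $\dot\theta(0)=\varepsilon>0$ blows up in finite time.

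Two refinements are worth making.
\begin{enumerate}
\item The zero eigenvalue of $\mathbf{J}$ has algebraic multiplicity $n_\theta+n_\phi$ plus its multiplicity in $\mathbf{M}$, while $\ker\mathbf{J}=\{(a,\mathbf{0})\}$. So when $0\in\sigma(\mathbf{M})$ the linearized positions drift linearly, and only your velocity-level reading of case 2 survives.
\item The Jacobian of $F=(-\nabla_\theta L,\nabla_\phi L)$ is exactly $\mathbf{M}(z)$, so the quantity $\mathbf{v}-F(z)$ is conserved along trajectories. The system is therefore the shifted gradient flow $\dot z=F(z)+c$, which offers an alternative route to your case 1 via the implicit function theorem.
\end{enumerate}
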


\section{Controlled Dynamics Formulation} \label{sec:control}
In this section, we design a feedback controller for the second-order dynamics derived in Equation \eqref{eq:theta-phi-second-order-system}. We provide a rigorous proof that the resulting system achieves local exponential stability, ensuring robust convergence to the equilibrium. Building on this theoretical foundation, we propose the Controlled Minimax algorithm, which integrates our control strategy into a practical optimization framework.

% Having established the local stability properties of the intrinsic second-order dynamics in Theorem \ref{thm:stability_ori}, we now propose a control framework to actively modify the system's convergence behavior. 
To facilitate the design of the controller, we extend the natural dynamics derived in \eqref{eq:theta-phi-second-order-system} by introducing auxiliary control inputs $u_\theta$ and $u_\phi$. For notational compactness, we denote the Hessian blocks of $L$ as $H_{\theta\theta} = \nabla^2_{\theta\theta} L$, $H_{\theta\phi} = \nabla^2_{\theta\phi} L$, and so forth. The resulting controlled dynamical system is given by:
\begin{equation}
\label{eq:controlled-system}
\begin{aligned}
    \ddot\theta
    &= - H_{\theta\theta}(\theta,\phi)\,\dot\theta
       - H_{\theta\phi}(\theta,\phi)\,\dot\phi
       + u_\theta(\theta,\phi,\dot\theta,\dot\phi),\\[2pt]
    \ddot\phi
    &= \phantom{-} H_{\phi\theta}(\theta,\phi)\,\dot\theta
       + H_{\phi\phi}(\theta,\phi)\,\dot\phi
       + u_\phi(\theta,\phi,\dot\theta,\dot\phi).
\end{aligned}
\end{equation}

We focus on the behavior of this system near a stationary point $(\theta^\ast,\phi^\ast)$ satisfying the first-order optimality conditions $\nabla_\theta L(\theta^\ast,\phi^\ast)=0$ and $\nabla_\phi L(\theta^\ast,\phi^\ast)=0$. To ensure the problem is well-posed for control synthesis, we introduce the following regularity assumptions.

\begin{assumption}
\label{ass:bounded-hessian}
The objective function $L$ is twice continuously differentiable ($L\in C^2$). Furthermore, there exist constants $r>0$ and $C>0$ such that for all states in the neighborhood
\begin{equation*}
    (\theta,\phi)\in\mathcal{N}
    :=\{(\theta,\phi):\|(\theta,\phi)-(\theta^\ast,\phi^\ast)\|\le r\},
\end{equation*}
the Hessian blocks satisfy the uniform bound:
% \begin{equation*}
%     \|H_{\theta\theta}(\theta,\phi)\|,
%     \ \|H_{\theta\phi}(\theta,\phi)\|,
%     \ \|H_{\phi\theta}(\theta,\phi)\|,
%     \ \|H_{\phi\phi}(\theta,\phi)\|
%     \le C,
% \end{equation*}
\begin{equation*}
\begin{aligned}
    &\|H_{\theta\theta}(\theta,\phi)\|,
    \ \|H_{\theta\phi}(\theta,\phi)\| \le C,\\
    &\|H_{\phi\theta}(\theta,\phi)\|,
    \ \|H_{\phi\phi}(\theta,\phi)\|
    \le C,
\end{aligned}
\end{equation*}
where $\|\cdot\|$ denotes the spectral norm (or any matrix norm induced by the Euclidean vector norm).
\end{assumption}

\begin{assumption}
\label{ass:well-posed}
We assume system ~\eqref{eq:theta-phi-second-order-system} is locally Lipschitz continuous in the aggregate state $(\theta,\phi,\dot\theta,\dot\phi)$ on $\mathcal{N}$. Consequently, solutions to this system exist and are unique in a neighborhood of the equilibrium $(\theta^\ast,\phi^\ast,0,0)$.
\end{assumption}

\subsection{Controller Design}

We propose a linear velocity feedback controller designed to introduce damping and cross-coupling between the $\theta$ and $\phi$ dynamics. The controllers are given by:
\begin{equation}
\label{eq:controller}
\begin{aligned}
    u_\theta(\dot\theta,\dot\phi)
    &= -\lambda_\theta \dot\theta
       + \beta A_c \dot\phi, \\[2pt]
    u_\phi(\dot\theta,\dot\phi)
    &= -\beta A_c^\top \dot\theta
       - \lambda_\phi \dot\phi,
\end{aligned}
\end{equation}
where $\lambda_\theta, \lambda_\phi > 0$ are scalar damping gains, $\beta \in \mathbb{R}$ is a cross-coupling gain, and $A_c \in \mathbb{R}^{d_\theta \times d_\phi}$ is a fixed coupling matrix.

In our settings, we define the effective rank $r \coloneqq \min\{d_\theta, d_\phi\}$ and select $A_c$ to provide identity coupling on the leading coordinates:
\begin{equation*}
    (A_c)_{ij} =
    \begin{cases}
        1, & \text{if } 1 \le i = j \le r,\\
        0, & \text{otherwise}.
    \end{cases}
\end{equation*}
This choice ensures direct interaction between the first $r$ components of $\theta$ and $\phi$ while leaving any remaining dimensions uncoupled.

To provide intuition for the proposed control structure, we examine the local behavior of the controlled system~\eqref{eq:controlled-system} near an equilibrium point $(\theta^\ast, \phi^\ast)$. Let $v(t) \coloneqq (\dot\theta(t), \dot\phi(t))^\top$ denote the combined velocity vector. The linearized velocity dynamics are governed by $\dot v = M v,$
where the closed-loop system matrix $M$ takes the form
\begin{equation}
\label{eq:M-matrix}
    M =
    \begin{pmatrix}
        - H_{\theta\theta}^\ast - \lambda_\theta I & - H_{\theta\phi}^\ast + \beta A_c \\[4pt]
        H_{\phi\theta}^\ast - \beta A_c^\top & H_{\phi\phi}^\ast - \lambda_\phi I
    \end{pmatrix}.
\end{equation}
Here, $H_{\alpha\gamma}^\ast \coloneqq H_{\alpha\gamma}(\theta^\ast, \phi^\ast)$ denotes the block components of the Hessian evaluated at the equilibrium.

The structure of $M$ illustrates the two distinct mechanisms introduced by the controller. First, the terms $-\lambda_\theta I$ and $-\lambda_\phi I$ provide diagonal damping to dissipate energy. Second, the term $\beta A_c$ introduces a skew-symmetric component that modifies the natural coupling between $\theta$ and $\phi$ (represented by the off-diagonal Hessian blocks). This decomposition highlights how the controller augments the inherent geometry of the system. We provide a rigorous stability proof for the full nonlinear system in the following section.

\subsection{Exponential Stability of the Controlled System}
In this section, we establish the stability properties of the closed-loop system \eqref{eq:controlled-system} under the proposed controller \eqref{eq:controller}.  Our primary focus is to determine the conditions on the damping gains $\lambda_\theta$ and $\lambda_\phi$ that guarantee the convergence of the velocity dynamics to zero. By analyzing the spectral properties of the system matrix $M$ defined in \eqref{eq:M-matrix}, we derive a sufficient condition for exponential stability that is robust to the coupling parameter $\beta$. The following theorem formalizes this result, providing an explicit decay rate for the velocity vector $v(t)$.

\begin{theorem}[Local Exponential stability]
\label{thm:local-stability}
Suppose Assumptions~\ref{ass:bounded-hessian}--\ref{ass:well-posed} hold and let $C>0$ be as in Assumption~\ref{ass:bounded-hessian}. Consider the closed-loop system~\eqref{eq:controlled-system} under the controller~\eqref{eq:controller}. If the damping gains are chosen such that
\begin{equation}
\label{eq:lambda-condition}
    \gamma := \min\{\lambda_\theta, \lambda_\phi\} - 2C > 0,
\end{equation}
then for any $\beta\in\mathbb{R}$, 
\[
    \|v(t)\| \le e^{-\gamma t}\,\|v(0)\|\qquad \text{for all } t\ge 0.
\]
Consequently, the full nonlinear system is locally exponentially stable at the equilibrium $(\theta^\ast,\phi^\ast,0,0)$.
\end{theorem}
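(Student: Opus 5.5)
The plan is a Lyapunov (energy) argument on the velocity alone, using $V(v)=\tfrac12\|v\|^2$ with $v=(\dot\theta,\dot\phi)$. The key observation is that the controller's cross-coupling is exactly skew-symmetric. The off-diagonal control blocks $\beta A_c$ (in the $\theta$-row) and $-\beta A_c^\top$ (in the $\phi$-row) cancel in any quadratic form: $\dot\theta^\top(\beta A_c\dot\phi)+\dot\phi^\top(-\beta A_c^\top\dot\theta)=0$. This is why the result should hold for every $\beta\in\mathbb{R}$.

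Concretely, along trajectories of \eqref{eq:controlled-system} with \eqref{eq:controller}, as long as $(\theta,\phi)$ stays in $\mathcal N$, I would write
\[
\dot V = -\lambda_\theta\|\dot\theta\|^2-\lambda_\phi\|\dot\phi\|^2 - \dot\theta^\top H_{\theta\theta}\dot\theta - \dot\theta^\top H_{\theta\phi}\dot\phi + \dot\phi^\top H_{\phi\theta}\dot\theta + \dot\phi^\top H_{\phi\phi}\dot\phi .
\]
The Hessian terms are then bounded using Assumption~\ref{ass:bounded-hessian}. The two diagonal terms are each at most $C\|\cdot\|^2$. The two cross terms are at most $2C\|\dot\theta\|\|\dot\phi\|\le C(\|\dot\theta\|^2+\|\dot\phi\|^2)$. (Since $H_{\phi\theta}=H_{\theta\phi}^\top$ for $L\in C^2$, these two cross terms in fact cancel, but the crude bound suffices.) Altogether the Hessian contribution is at most $2C\|v\|^2$, so
\[
\dot V\le -(\min\{\lambda_\theta,\lambda_\phi\}-2C)\|v\|^2=-2\gamma V .
\]
Grönwall then gives $V(t)\le e^{-2\gamma t}V(0)$, i.e. $\|v(t)\|\le e^{-\gamma t}\|v(0)\|$.

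The main obstacle is that this estimate holds only while $(\theta,\phi)\in\mathcal N$, so I must show the trajectory never leaves the neighborhood. Since $\dot\theta,\dot\phi$ are components of $v$, I would integrate the decay bound:
\[
\|(\theta,\phi)(t)-(\theta^*,\phi^*)\|\le \|(\theta,\phi)(0)-(\theta^*,\phi^*)\|+\|v(0)\|/\gamma .
\]
Choosing initial data with $\|(\theta,\phi)(0)-(\theta^*,\phi^*)\|<r/2$ and $\|v(0)\|<\gamma r/2$, a continuation argument applies. Let $T$ be the first exit time from $\mathcal N$. On $[0,T)$ the estimate holds, and it forces the position to stay strictly inside the ball of radius $r$, which contradicts exit. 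Hence $T=\infty$. Existence and uniqueness on this interval come from Assumption~\ref{ass:well-posed}; the added linear control keeps the vector field locally Lipschitz.

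Finally, for the stability claim at $(\theta^*,\phi^*,0,0)$, the velocity decays exponentially and the position converges to some limit $\theta^*+\int_0^\infty\dot\theta\,dt$, at an exponential rate with deviation controlled by $\|v(0)\|/\gamma$. Here I must be careful. If the original gradient-flow relation is not imposed, the natural conclusion is stability of the set of zero-velocity equilibria with exponential convergence of $v$. To conclude that the limit is exactly $(\theta^*,\phi^*)$, I would invoke the constraint $v=(-\nabla_\theta L,\nabla_\phi L)$ inherited from \eqref{eq:gf-theta-phi}. Then $v\to0$ forces convergence to a stationary point, and isolation of $(\theta^*,\phi^*)$ in $\mathcal N$ identifies the limit. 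I would present this last identification as the delicate point, stating the exponential bound for $\|(\theta,\phi,v)(t)-(\theta^*,\phi^*,0,0)\|$ in terms of the initial deviation.
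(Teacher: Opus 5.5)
Your velocity estimate is correct, and it is more rigorous than the paper's route. The paper computes $\dot V=v^\top Mv$ only for the linearization $\dot v=Mv$, with the Hessian blocks frozen at $(\theta^\ast,\phi^\ast)$, and then transfers the conclusion to the nonlinear system by citing linearization. You instead differentiate $V$ along the nonlinear closed loop, with Hessians evaluated on the trajectory and bounded by Assumption~\ref{ass:bounded-hessian} on all of $\mathcal N$. You then close with the exit-time argument. That is what makes $\|v(t)\|\le e^{-\gamma t}\|v(0)\|$ hold for the nonlinear system when $\|(\theta,\phi)(0)-(\theta^\ast,\phi^\ast)\|<r/2$ and $\|v(0)\|<\gamma r/2$. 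Your remark that the Hessian cross terms cancel by symmetry is also right; the paper bounds them by $2C\|\dot\theta\|\|\dot\phi\|$, so in fact $\min\{\lambda_\theta,\lambda_\phi\}>C$ would already suffice. One slip: the limit position is $\theta(0)+\int_0^\infty\dot\theta\,dt$, not $\theta^\ast+\int_0^\infty\dot\theta\,dt$.

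The genuine gap is the final identification of the limit, because the relation $v=F(z)$ is not inherited by the controlled flow. Write $z=(\theta,\phi)$, $F=(-\nabla_\theta L,\nabla_\phi L)$ and $K=\begin{pmatrix}-\lambda_\theta I&\beta A_c\\-\beta A_c^\top&-\lambda_\phi I\end{pmatrix}$. The closed loop is $\dot z=v$, $\dot v=DF(z)v+Kv$, so $\frac{d}{dt}\bigl(v-F(z)\bigr)=K\dot z$. Even if $v(0)=F(z(0))$, this gives $v(t)-F(z(t))=K\bigl(z(t)-z(0)\bigr)$, hence $F(z_\infty)=-K\bigl(z_\infty-z(0)\bigr)$. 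Since $K+K^\top=-2\,\mathrm{diag}(\lambda_\theta I,\lambda_\phi I)$, $K$ is nonsingular, so $z_\infty$ is stationary only if $z_\infty=z(0)$. For example, take the Dirac-GAN $L=\theta\phi$ (so $C=1$) with $\lambda_\theta=\lambda_\phi=3$, $\beta=0$, $z(0)=(\varepsilon,0)$ and $v(0)=F(z(0))=(0,\varepsilon)$. The solution converges to $(0.9\varepsilon,0.3\varepsilon)\neq(0,0)$.

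So $v\to0$ does not force a stationary limit, and isolation of $(\theta^\ast,\phi^\ast)$ cannot be used. No other argument will rescue the last sentence of the statement either. The closed-loop vector field vanishes whenever $v=0$, so every $(\theta,\phi,0,0)$ is an equilibrium. Hence $(\theta^\ast,\phi^\ast,0,0)$ is not even asymptotically stable, let alone exponentially stable in the sense of Definition~\ref{def:stability}.

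The paper's own last step has the same defect. The Jacobian of the full system at $(\theta^\ast,\phi^\ast,0,0)$ is $\begin{pmatrix}0&I\\0&M\end{pmatrix}$, not $M$, and it has eigenvalue $0$ with multiplicity $\dim\theta+\dim\phi$, so linearization says nothing about the full state. The correct conclusion is the one you reached before attempting the fix:
\begin{itemize}
\item Lyapunov stability of $(\theta^\ast,\phi^\ast,0,0)$;
\item exponential decay of $v$;
\item exponential convergence of $(\theta,\phi)$ to some rest point within $\|(\theta,\phi)(0)-(\theta^\ast,\phi^\ast)\|+\|v(0)\|/\gamma$ of $(\theta^\ast,\phi^\ast)$.
\end{itemize}
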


While Theorem~\ref{thm:local-stability} provides a solid theoretical guarantee for the convergence of the closed-loop system, it relies on the assumption that the control input is available continuously and instantaneously. In a realistic experimental setup, the control law must be discretized and computed iteratively. Therefore, we now turn our attention to the algorithmic implementation. Section~\ref{sec:algorithm} details the specific iterative scheme used to approximate the optimal control inputs and discusses its convergence properties and computational complexity.

\begin{figure*}[t] % The '*' makes it span both columns. Remove '*' for a single column.
    \centering
    
    % --- First Subfigure ---
    \begin{subfigure}[b]{0.32\textwidth}
        \centering
        \includegraphics[width=\linewidth]{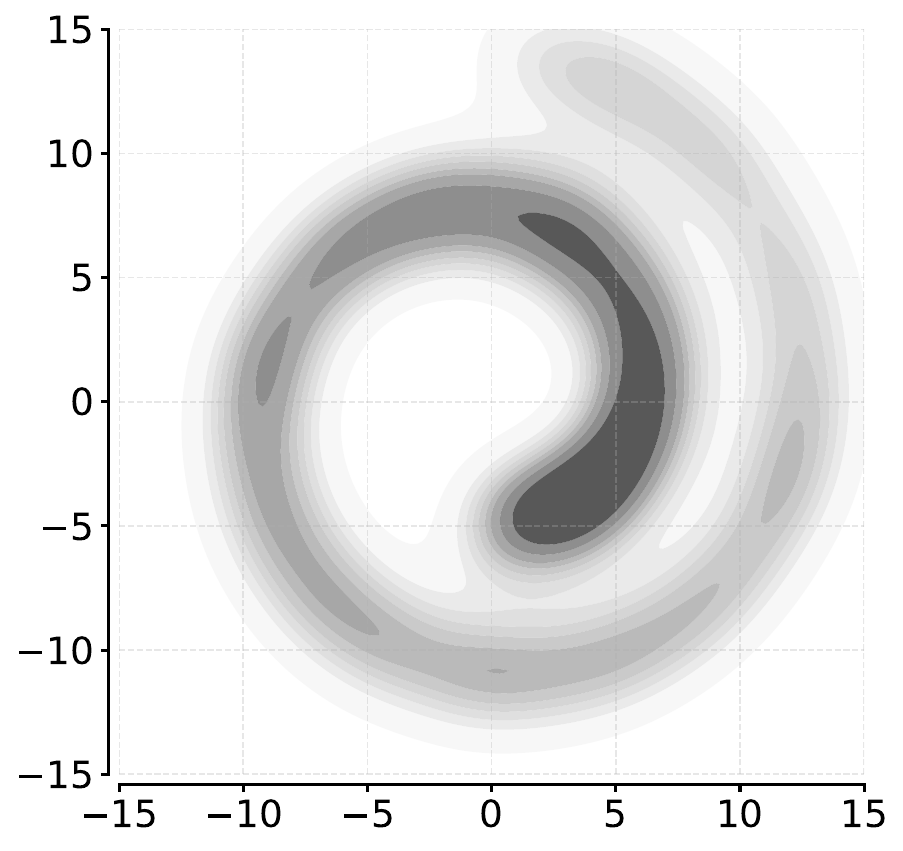}
        \caption{Target Distribution}
        \label{fig:target}
    \end{subfigure}
    \hfill % Adds flexible space between images
    % --- Second Subfigure ---
    \begin{subfigure}[b]{0.32\textwidth}
        \centering
        \includegraphics[width=\linewidth]{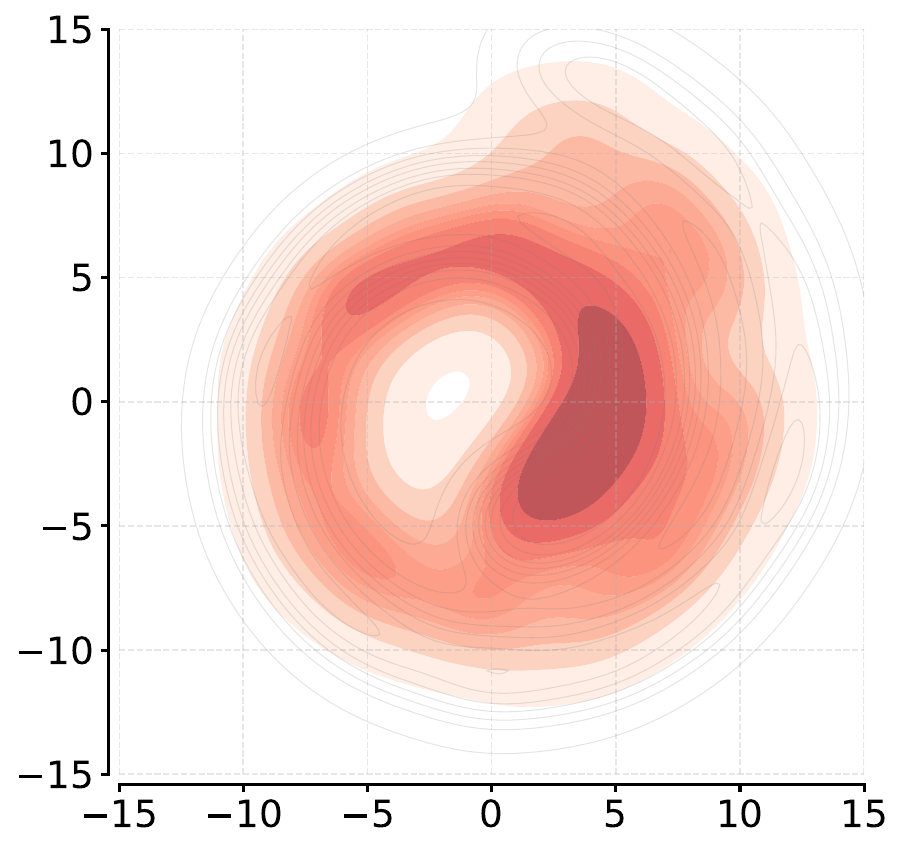}
        \caption{Baseline (Standard GAN)}
        \label{fig:baseline}
    \end{subfigure}
    \hfill % Adds flexible space between images
    % --- Third Subfigure ---
    \begin{subfigure}[b]{0.32\textwidth}
        \centering
        \includegraphics[width=\linewidth]{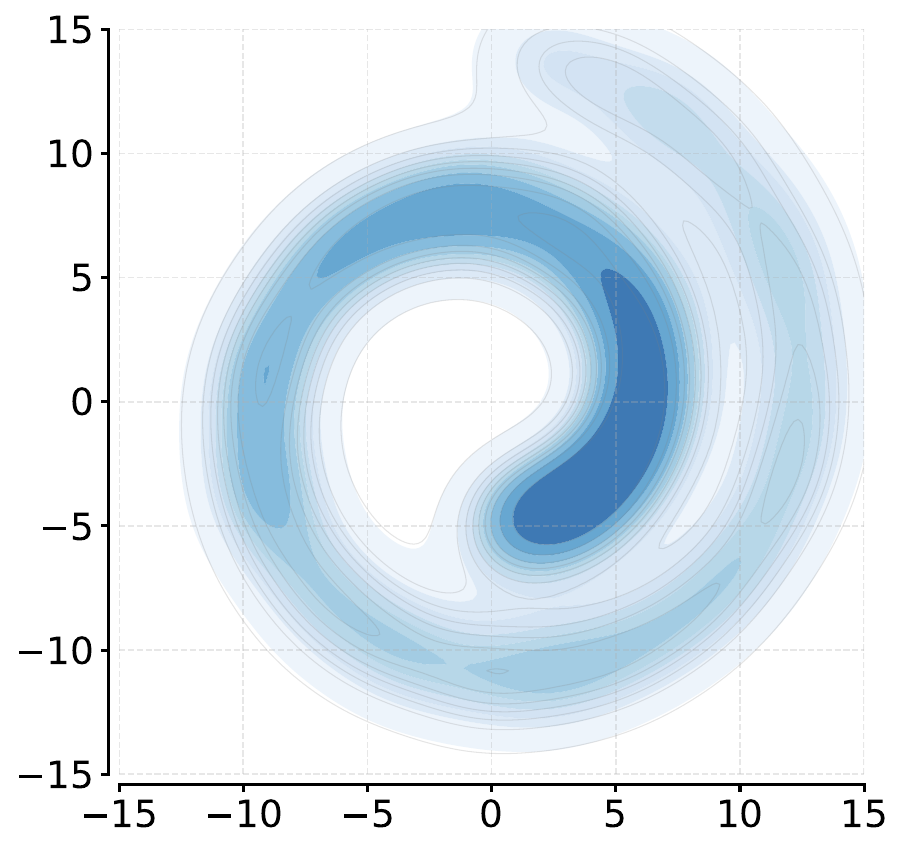}
        \caption{Ours (Controlled-Minimax)}
        \label{fig:ours}
    \end{subfigure}
    
    \caption{Qualitative results on the Swiss Roll dataset. While the standard GAN baseline struggles to learn the full manifold structure and exhibits disjointed segments, our proposed Controlled-Minimax successfully generates a continuous and smooth spiral that closely matches the target distribution.}
    \label{fig:comparison_swiss}
\end{figure*}

\section{The Controlled-Minimax Algoirthm} \label{sec:algorithm}
In this section, we propose the \textit{Controlled-Minimax} algorithm to solve the minimax optimization problem while ensuring robust stability. This method augments standard gradient descent-ascent dynamics by integrating the cross-coupling controller design defined in \eqref{eq:controller}. By substituting this controllers into the update steps, the algorithm effectively modifies the natural gradient flow, introducing both dissipative damping and skew-symmetric coupling to suppress oscillatory behavior and stabilize the trajectory toward the equilibrium. The complete procedure is detailed in Algorithm~\ref{alg:controlled_minimax}.

\begin{algorithm}[h]
\caption{Controlled-Minimax Optimization}
\label{alg:controlled_minimax}
\begin{algorithmic}[1]
\REQUIRE Objective function $L(\theta, \phi)$, learning rate $\eta > 0$
\REQUIRE Damping gains $\lambda_\theta, \lambda_\phi$, coupling parameter $\beta$
\REQUIRE Initial parameters $\theta_0, \phi_0$

\STATE $k \leftarrow 0$
\WHILE{convergence criteria not met}
    \STATE \textbf{1. Compute Gradients:}
    \STATE $g_\theta \leftarrow \nabla_\theta L(\theta_k, \phi_k)$
    \STATE $g_\phi \leftarrow \nabla_\phi L(\theta_k, \phi_k)$

    \STATE \textbf{2. Compute Control Inputs:}
    % \STATE \textit{Using instantaneous velocities $v_\theta = -g_\theta$ and $v_\phi = g_\phi$:}
    \STATE $u_{\theta} \leftarrow \lambda_\theta g_\theta + \beta A_c g_\phi$
    \STATE $u_{\phi} \leftarrow \beta A_c^\top g_\theta - \lambda_\phi g_\phi$

    \STATE \textbf{3. Update Parameters:}
    \STATE $\theta_{k+1} \leftarrow \theta_k - \eta g_\theta + \eta u_{\theta}$
    \STATE $\phi_{k+1} \leftarrow \phi_k + \eta g_\phi + \eta u_{\phi}$

    \STATE $k \leftarrow k + 1$
\ENDWHILE
\STATE \textbf{return} $\theta_k, \phi_k$
\end{algorithmic}
\end{algorithm}

The proposed Controlled-Minimax Optimization algorithm stabilizes the oscillatory dynamics inherent in standard Gradient Descent-Ascent (GDA) by modeling the optimization as a controlled dynamical system. We introduce two feedback mechanisms to counteract the rotational vector fields that typically prevent convergence to Nash equilibria:

\begin{itemize}
    \item \textbf{Dissipative Damping}: The terms parameterized by $\lambda_\theta$ and $\lambda_\phi$ introduce artificial friction into the update laws. This negative feedback dampens the kinetic energy accumulation that drives trajectories into limit cycles, effectively improving the local conditioning of the optimization landscape.
    \item \textbf{Cross-Coupled Correction}: The terms involving $\beta$ and $A_c$ inject the opponent's gradient information into the current player's update. This cross-coupling applies a corrective force orthogonal to the standard gradient flow, neutralizing the rotational component of the vector field and straightening the trajectory toward the saddle point.
\end{itemize}

Collectively, these controls transform the update laws from simple steepest descent/ascent into a stabilized trajectory that balances local gradient exploitation with improved convergence stability.

\begin{figure*}[t]
    \centering
    \begin{minipage}{0.92\textwidth}
        \begin{subfigure}[b]{0.24\textwidth}
            \centering
            \includegraphics[width=\textwidth]{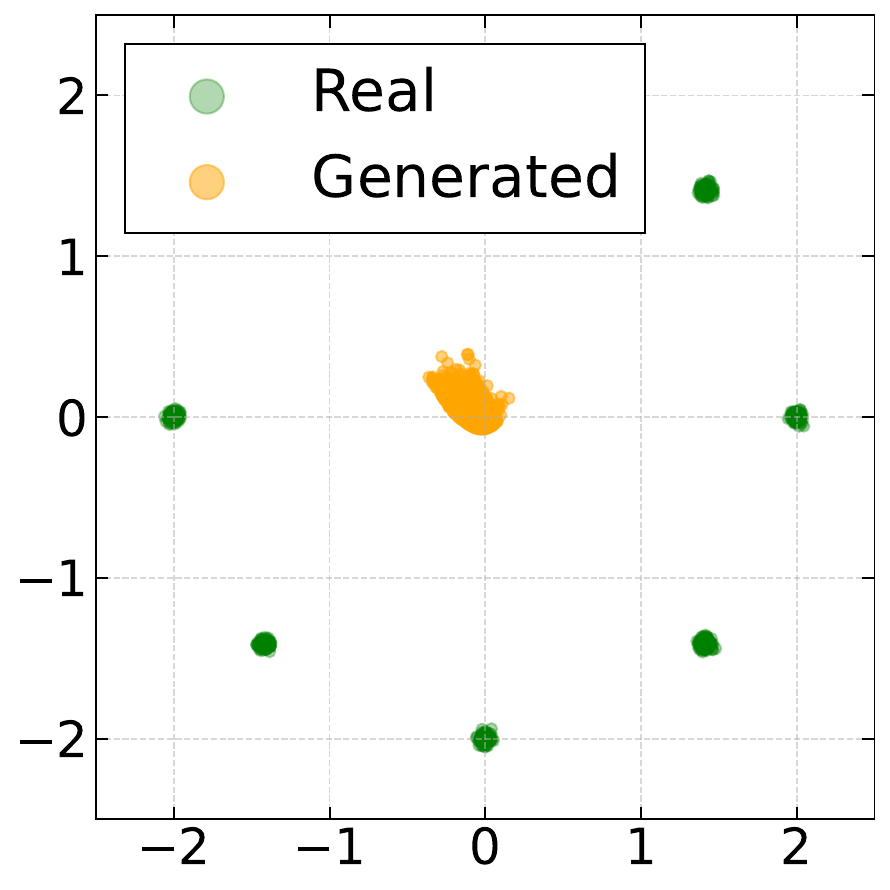}
        \end{subfigure}
        \hfill
        \begin{subfigure}[b]{0.24\textwidth}
            \centering
            \includegraphics[width=\textwidth]{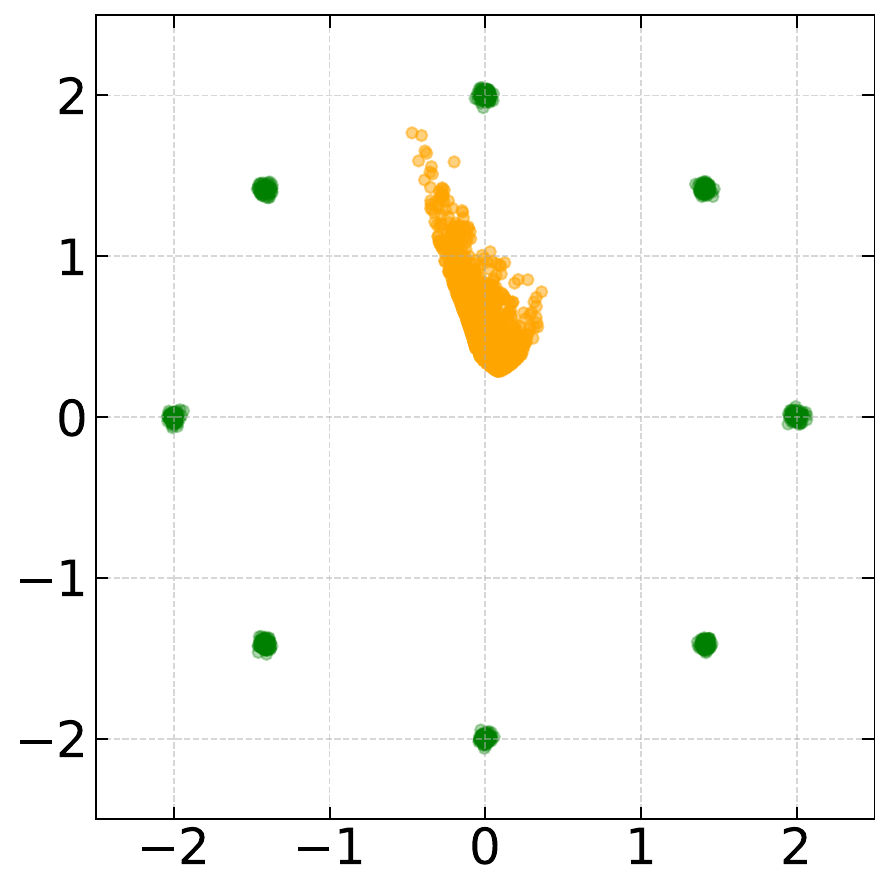}
        \end{subfigure}
        \hfill
        \begin{subfigure}[b]{0.24\textwidth}
            \centering
            \includegraphics[width=\textwidth]{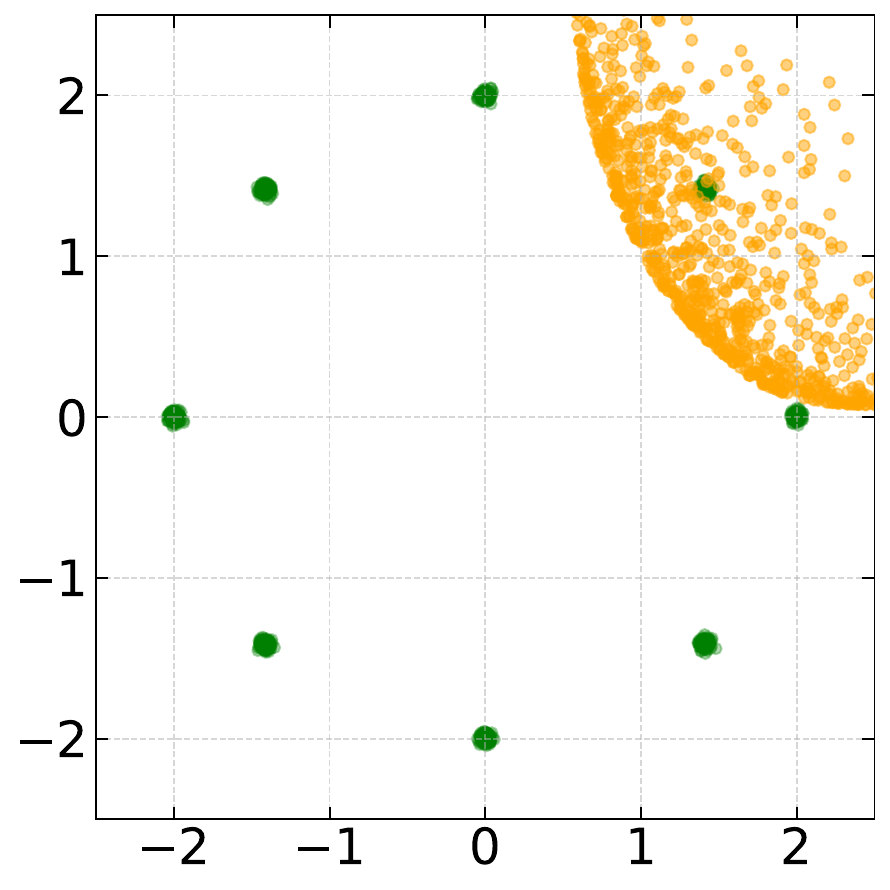}
        \end{subfigure}
        \hfill
        \begin{subfigure}[b]{0.24\textwidth}
            \centering
            \includegraphics[width=\textwidth]{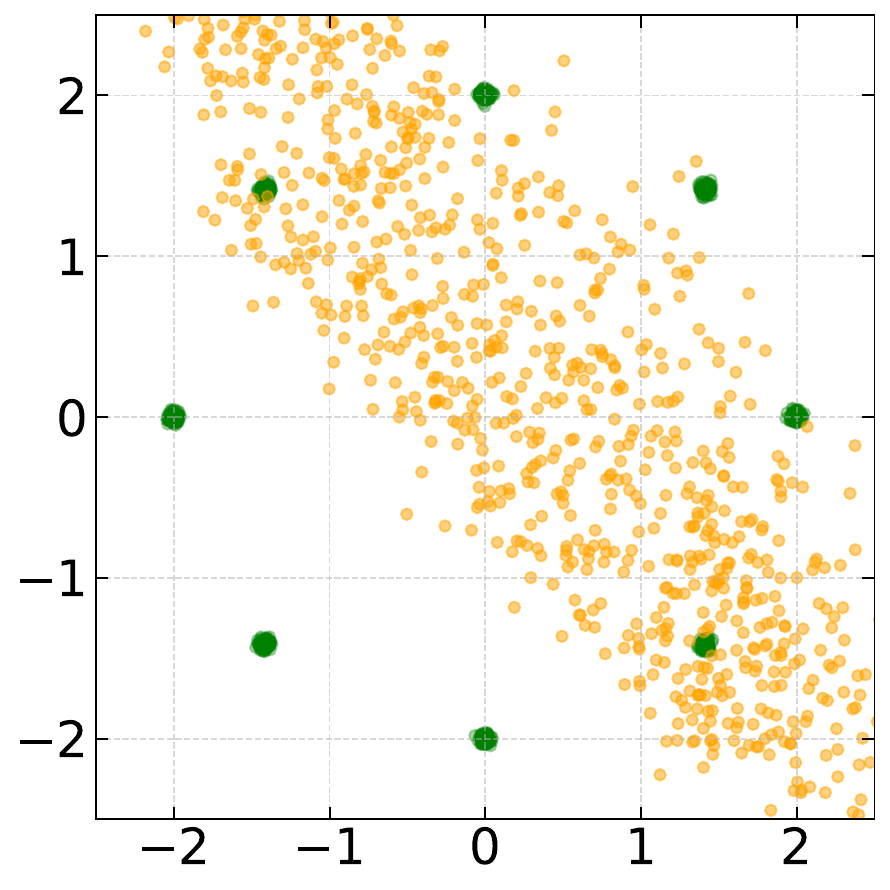}
        \end{subfigure}
    \end{minipage}
    
    \vspace{0.3cm}
    \begin{minipage}{0.92\textwidth}
        \begin{subfigure}[b]{0.24\textwidth}
            \centering
            \includegraphics[width=\textwidth]{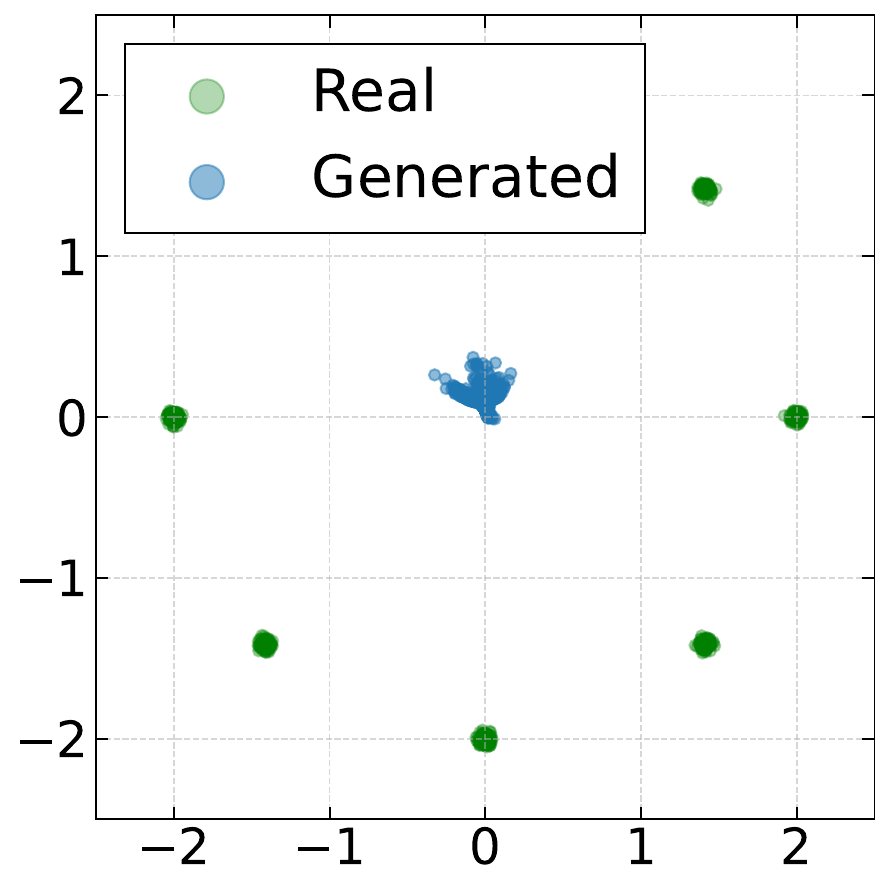}
        \end{subfigure}
        \hfill
        \begin{subfigure}[b]{0.24\textwidth}
            \centering
            \includegraphics[width=\textwidth]{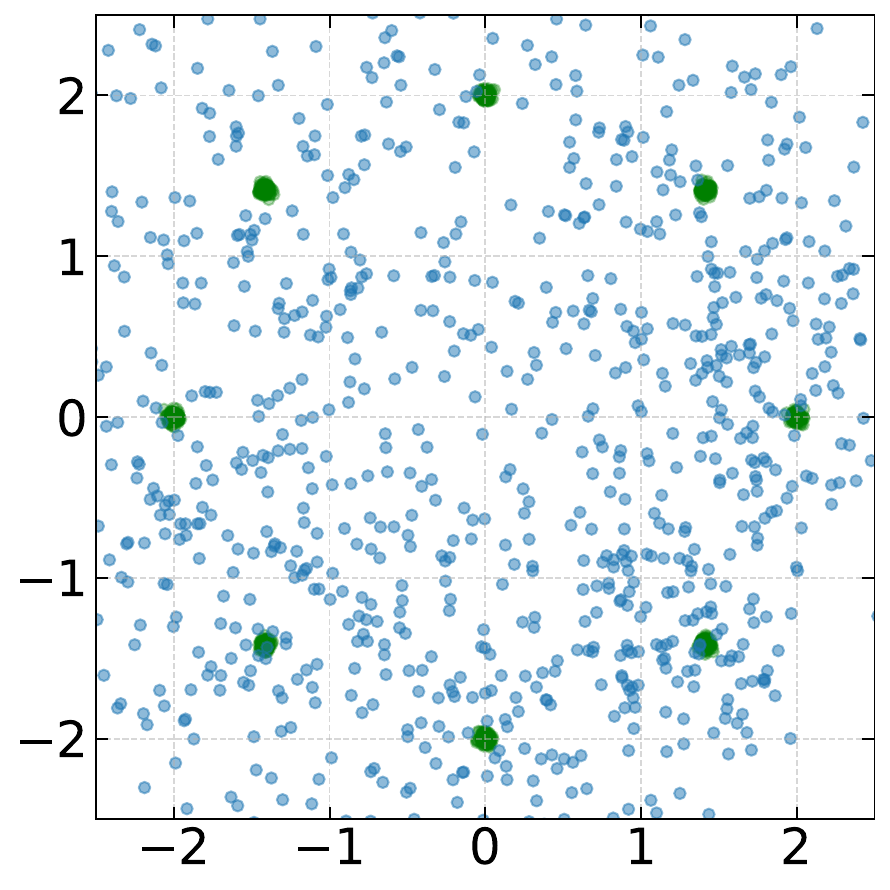}
        \end{subfigure}
        \hfill
        \begin{subfigure}[b]{0.24\textwidth}
            \centering
            \includegraphics[width=\textwidth]{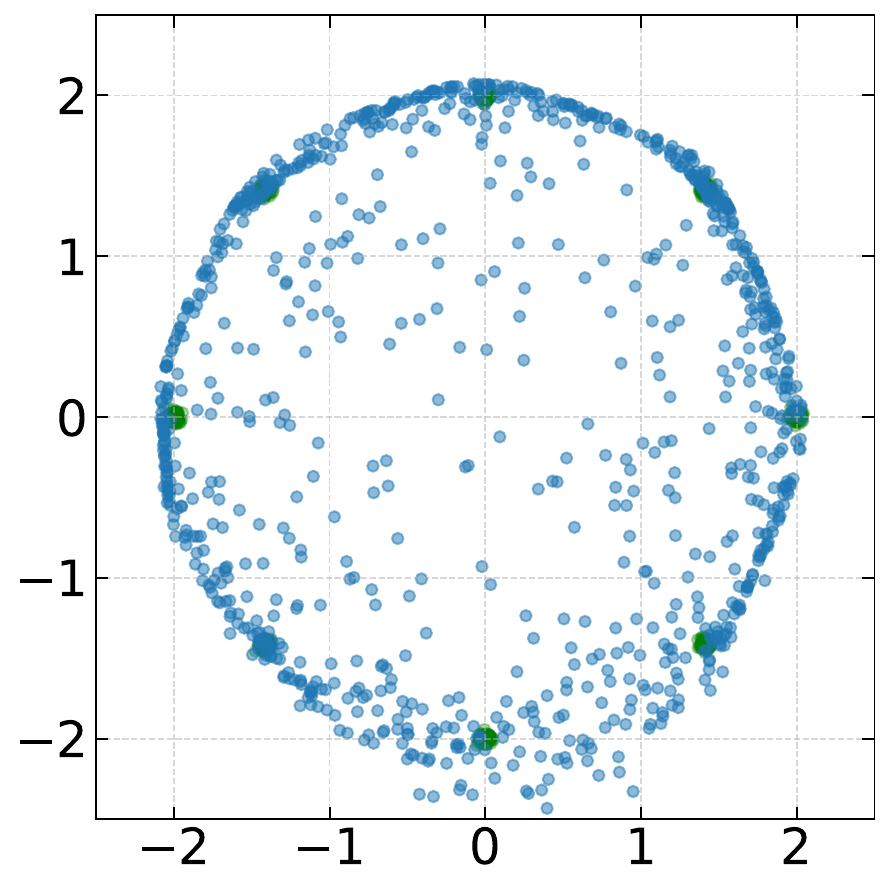}
        \end{subfigure}
        \hfill
        \begin{subfigure}[b]{0.24\textwidth}
            \centering
            \includegraphics[width=\textwidth]{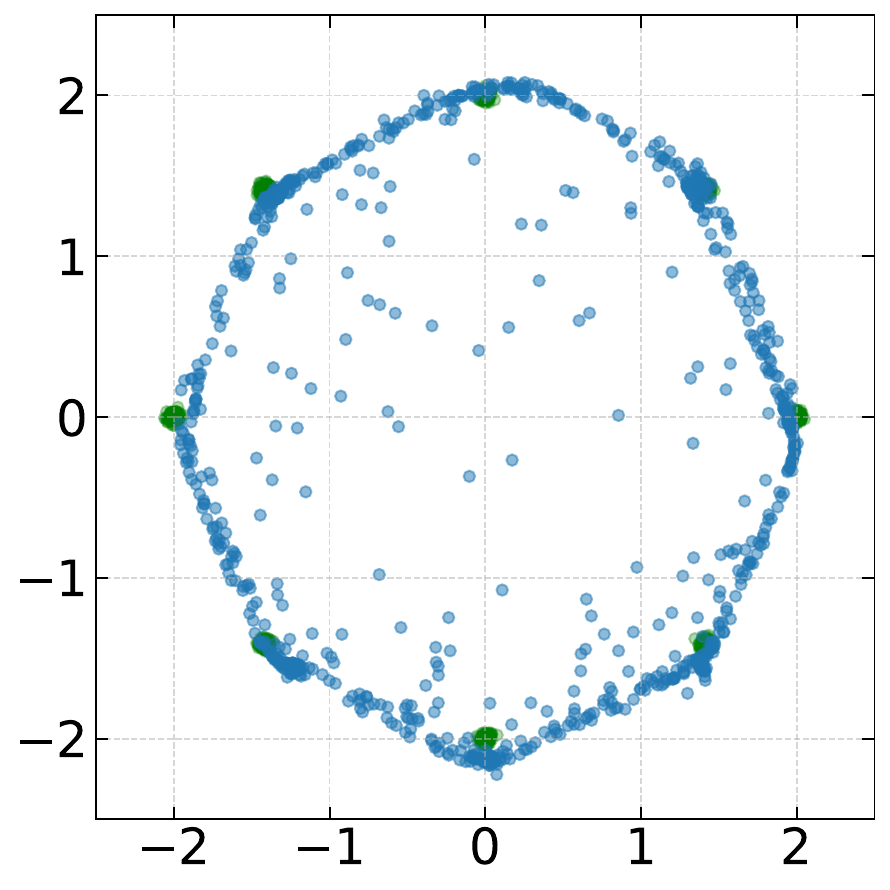}
        \end{subfigure}
    \end{minipage}

    \caption{Training evolution on the 8-Gaussian mixture dataset. The top row displays the distribution generated by the Standard GAN baseline, while the bottom row illustrates the proposed Controlled-Minimax. Columns correspond to training iterations $t={0, 1000, 3000, 4999}$.}
    \label{fig:gan_comparison}
\end{figure*}

\section{Experiments}
In this section, we conduct a comprehensive experimental analysis ranging from low-dimensional pathological examples to high-dimensional image synthesis tasks to empirically validate the theoretical guarantees of the Controlled-Minimax framework. We begin by examining convergence behavior on synthetic benchmarks known to induce rotational dynamics. Specifically, we visualize optimization trajectories and verify local exponential stability on the Dirac-GAN \citep{mescheder2018training} in Figure \ref{fig:combined_plots}, the Swiss Roll manifold in Figure \ref{fig:comparison_swiss}, and the 2D Gaussian mixtures \citep{rasmussen1999infinite} in Figure \ref{fig:gan_comparison}.

Subsequently, we evaluate the scalability and robustness of our method on large-scale adversarial image generation using the StyleGAN2-ada \citep{karras2020training} architecture across three standard datasets of varying complexity: CIFAR-10 \citep{krizhevsky2009learning}, LSUN (Bedroom and Cat categories) \citep{yu2015lsun}, and Flickr-Faces-HQ (FFHQ) \citep{karras2019style}. By employing the Fréchet Inception Distance (FID) score \citep{heusel2017gans} to quantitatively assess sample quality and diversity, we demonstrate that our Controlled-Minimax effectively suppresses oscillatory behavior and accelerates convergence relative to baseline methods without compromising generation fidelity in Figure \ref{fig:comparison_main} and Table \ref{para}.

\subsection{Dirac-GAN}
\textbf{Problem Setting.} The real data distribution is a Dirac-delta function centered at the origin, $p_{data} = \delta_0$. The generator is parameterized by a single scalar $\theta$ and is defined as $G_{\theta}(z) = \theta$. The discriminator is modeled as a linear function $D_{\psi}(x) = \psi \cdot x$, parameterized by a scalar $\psi$. The unique Nash equilibrium for this game is located at the origin $(\theta, \psi) = (0, 0)$.

\textbf{Result.} Figure \ref{fig:combined_plots} presents a comparison of the optimization trajectories in parameter space. As expected, the standard Dirac-GAN baseline exhibits persistent oscillatory behavior, with $\theta$ and $\psi$ cycling indefinitely around the origin without converging. In contrast, our Controlled Minimax method effectively dampens these oscillations, driving the parameters $(\theta, \psi)$ rapidly to the unique Nash equilibrium at $(0,0)$.

\subsection{2D Gaussian}
\textbf{Dataset}. To evaluate the model's ability to capture multimodal distributions and resist mode collapse, we utilize a synthetic 2D dataset consisting of a mixture of eight isotropic Gaussians. The modes are arranged in a circle with a radius of 2.0 around the origin, positioned at intervals of $\pi/4$ (specifically at coordinates $(\pm 2, 0)$, $(0, \pm 2)$, and $(\pm \sqrt{2}, \pm \sqrt{2})$). Each Gaussian mode has a standard deviation of $\sigma = 0.02$, creating distinct, separated clusters with minimal overlap. During training, real data samples are drawn uniformly from these eight mixture components, while the generator attempts to map a standard normal latent distribution to this target manifold.

\textbf{Network Architecture.} Both the generator ($G$) and the discriminator ($D$) are parameterized as fully connected Multi-Layer Perceptrons (MLPs). The generator maps a 2-dimensional latent vector $z \sim \mathcal{N}(0, I)$ to the 2-dimensional data space using a network with two hidden layers, each containing 128 units. Rectified Linear Unit (ReLU) activation functions are applied after each hidden layer, while the output layer is linear. The discriminator mirrors this architecture, accepting 2-dimensional data points as input and processing them through two hidden layers of 128 units with ReLU activations. The final layer of the discriminator projects the features to a scalar scalar value, which is passed through a Sigmoid activation function to output the probability of the input being real.

\textbf{Result.} The qualitative results of the 2D Gaussian experiment are shown in Figure \ref{fig:gan_comparison}. The standard GAN baseline exhibits severe mode collapse; despite the target distribution containing eight distinct modes, the generator fails to capture the full multimodal structure. In contrast, our Controlled Minimax approach successfully stabilizes the training dynamics. The generator learns to distribute its probability mass evenly across all eight Gaussian components, accurately recovering the geometry and diversity of the underlying distribution without dropping modes.

\begin{figure*}[t] % The '*' makes it span both columns. Remove '*' for a single column.
    \centering
    
    % --- First Subfigure ---
    \begin{subfigure}[b]{0.33\textwidth}
        \centering
        \includegraphics[width=\linewidth]{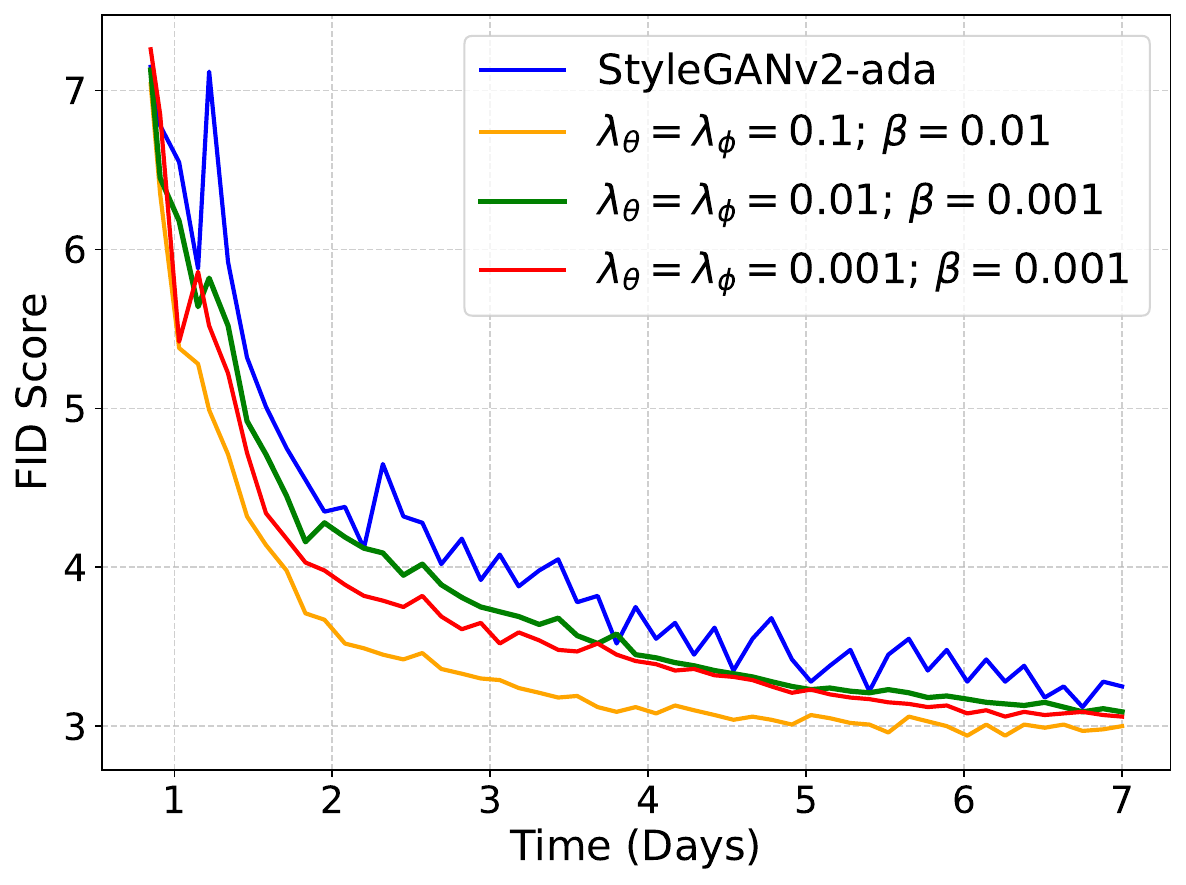}
        \caption{Cifar 10}
        \label{fig:target}
    \end{subfigure}
    \hfill % Adds flexible space between images
    % --- Second Subfigure ---
    \begin{subfigure}[b]{0.33\textwidth}
        \centering
        \includegraphics[width=\linewidth]{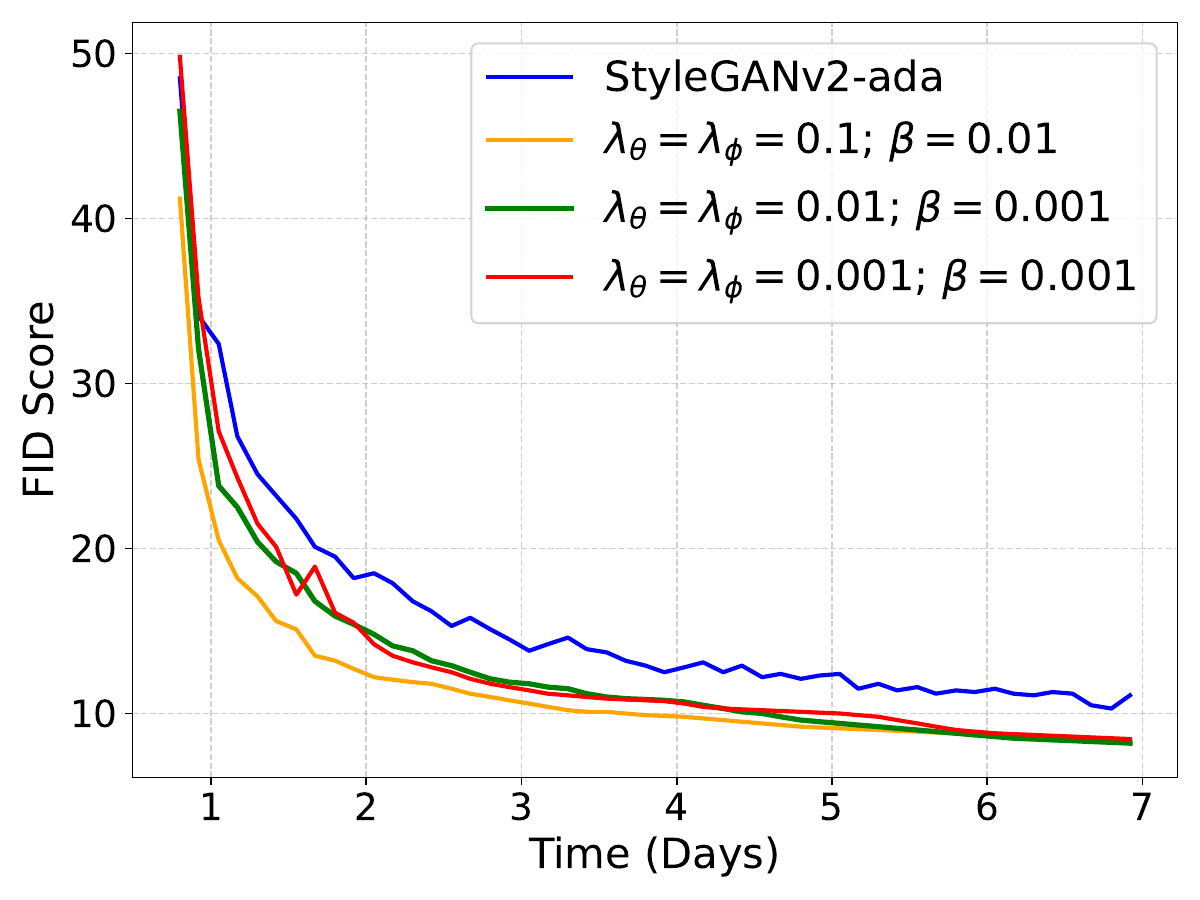}
        \caption{LSUN-Cat}
        \label{fig:baseline}
    \end{subfigure}
    \hfill % Adds flexible space between images
    % --- Third Subfigure ---
    \begin{subfigure}[b]{0.33\textwidth}
        \centering
        \includegraphics[width=\linewidth]{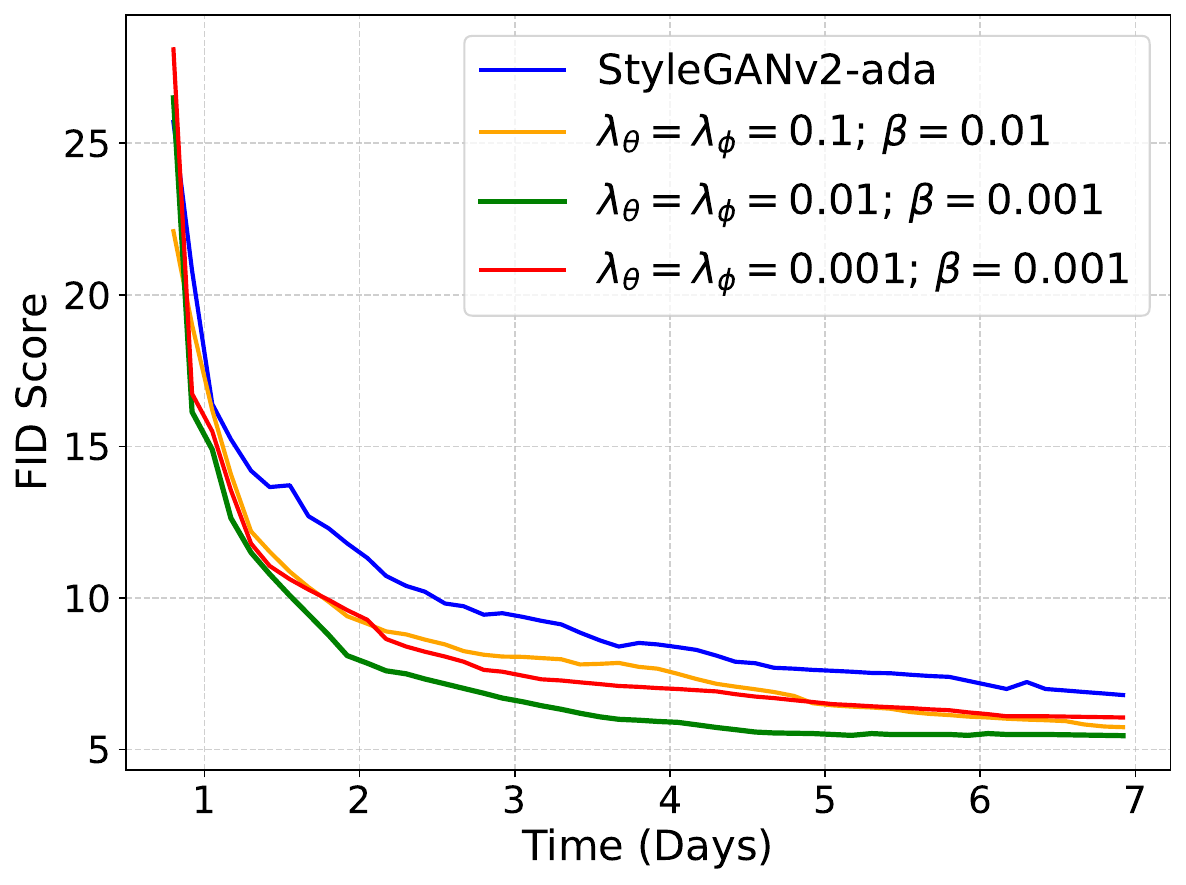}
        \caption{FFHQ}
        \label{fig:ours}
    \end{subfigure}
    
    \caption{Training convergence analysis across different datasets. We plot the FID score as a function of training time (in days). The curves compare the performance of various hyperparameter configurations against the StyleGAN2-ada baseline.}
    \label{fig:comparison_main}
\end{figure*}

\subsection{Swiss Roll}

\textbf{Dataset.} 
We evaluate our method on the 2D Swiss Roll dataset, a classic benchmark for manifold learning. The target distribution $p_{data}(x)$ is generated by sampling $N=3,000$ points according to $x = t \cos(t)$ and $y = t \sin(t)$, where $t \sim \mathcal{U}[1.5\pi, 4.5\pi]$. We add Gaussian noise $\epsilon \sim \mathcal{N}(0, 0.25^2)$ to simulate real-world variance. 

\textbf{Network Architectures.} 
To ensure a fair comparison, both the baseline and our method share identical backbones. The generator has an MLP with 3 hidden layers (128 units, ReLU, Batch Norm) mapping $z \in \mathbb{R}^2 \sim \mathcal{N}(0, I)$ to data space, while the discriminator has an MLP with 3 hidden layers (128 units, LeakyReLU $\alpha=0.2$) outputting a scalar probability.

\textbf{Result.} The generated distributions for the Swiss Roll dataset are visualized in Figure \ref{fig:comparison_swiss}. The Baseline GAN struggles to learn the continuous, spiraling manifold, and exhibits mode collapse. Conversely, our Controlled Minimax method successfully generates samples that closely adhere to the true spiral trajectory, maintaining uniform coverage along the manifold and effectively reproducing the noise characteristics of the target distribution.

\subsection{StyleGAN2-ada}
\textbf{Setting.} We strictly adhere to the experimental settings described in the original paper \cite{karras2020training} to ensure a direct and fair comparison. We evaluate performance on three standard benchmarks: CIFAR-10 ($32 \times 32$), LSUN Cat ($256 \times 256$), and FFHQ ($1024 \times 1024$).

\textbf{Result.} We compare the FID score of our Controlled-Mimimax against the standard StyleGAN2-ada baseline. Figure \ref{fig:comparison_main} illustrates the convergence speed, plotting FID scores against training time on various hyperparameter choices. Our method demonstrates faster convergence and greater stability across all three datasets. Table \ref{para} presents the quantitative evaluation after 7 days of training. Our approach consistently outperforms the baseline, achieving the lowest FID scores across all tested resolutions. Specifically, on Cifar10, our optimal configuration yields an FID of $2.91$, a marked improvement over the baseline's $3.37$. This performance advantage is further amplified in high-resolution domains; on LSUN Cat and FFHQ, we reduce the FID from $10.48$ to $8.18$ and from $7.81$ to $5.46$, respectively. Furthermore, the reduced oscillation ranges observed over the final 24 hours demonstrate superior stability, contrasting sharply with the high variance exhibited by the baseline near convergence. These findings serve to validate our theoretical stability guarantees, demonstrating that the Controlled-Minimax framework effectively improves both model performance and convergence behavior.

\begin{table}[h]  
\begin{adjustbox}{width={82mm},keepaspectratio}%
\begin{tabular}{|c|c|clc|}
\hline
\multicolumn{2}{|c|}{} &\multicolumn{1}{c|}{$\lambda = 0.1$} & \multicolumn{1}{l|}{$\lambda = 0.01$} &  $\lambda = 0.001$ \\ \hline

\multirow{4}{*}{\begin{tabular}[c]{@{}c@{}}Cifar10\\ (32*32)\end{tabular}}  & $\beta = 0.01$ & $ 2.98 \pm 0.03$& $3.22 \pm 0.02$ & $3.45 \pm 0.05$\\ 
  &  $\beta = 0.001$ &  $\mathbf{2.91 \pm 0.02}$   &  $3.19\pm0.03$  & $3.29 \pm 0.04$  \\
 &   $\beta = 0.0001$  &  $3.01 \pm 0.03$ & $2.94 \pm 0.02$  & $3.14 \pm 0.05$ \\ \cline{2-5} 
  & Baseline     & \multicolumn{3}{c|}{$3.37 \pm 0.24$}             \\ \hline

\multirow{4}{*}{\begin{tabular}[c]{@{}c@{}}LSUN\\Cat\\ (256*256)\end{tabular}}     &  $\varrho_{2} = 0.01$   &    $8.45 \pm 0.23$       &           $9.21 \pm 0.32$         &  $ 9.84 \pm 0.26$\\ 
  &   $\varrho_{2} = 0.001$  & $8.39 \pm 0.12$      &  $8.24 \pm 0.27$          &   $9.92 \pm 0.25$\\ 
  & $\varrho_{2} = 0.0001$  &     $8.73 \pm 0.17$   &    $\mathbf{8.18 \pm 0.16}$        & $8.42 \pm 0.36$ \\ \cline{2-5} 
 & Baseline     & \multicolumn{3}{c|}{$10.48 \pm 0.44$}             \\ \hline
 
\multirow{4}{*}{\begin{tabular}[c]{@{}c@{}}FFHQ\\ (1024*1024)\end{tabular}} & $\varrho_{2} = 0.01$ &     $5.75 \pm 0.18$       &    $6.85 \pm 0.29$            & $7.93\pm 0.27$ \\
&$\varrho_{2} = 0.001$  & $5.98 \pm 0.29$ &    $\mathbf{5.46 \pm 0.16}$    & $6.83 \pm 0.35$  \\ 
&$\varrho_{2} = 0.0001$  &  $6.84 \pm 0.31$  & $6.57 \pm 0.23$        &  $6.06 \pm 0.28$\\ \cline{2-5}
& Baseline     & \multicolumn{3}{c|}{$7.81 \pm 0.39$}                \\ \hline
\end{tabular}
\end{adjustbox}
\caption{FID scores for each dataset after training for 7 days. ($a \pm b$ in this table should be interpret as $a$ is the FID score after 7 days, $b$ is the range of oscillation on the 24 hours span of the seventh day). Throughout, we assume a unified parameter $\lambda = \lambda_\theta = \lambda_\phi$.}
\label{para}
\end{table}

% \begin{table}[h]
% \centering
% \caption{Final FID scores. Results are reported as mean $\pm$ standard deviation over 10 runs. Lower is better.}
% \label{tab:fid_scores}
% \begin{tabular}{lcc}
% \toprule
% \textbf{Dataset} & \textbf{StyleGAN2-ADA} & \textbf{Ours} \\
% \midrule
% CIFAR-10    & $2.92 \pm 0.05$ & $\mathbf{2.65 \pm 0.04}$ \\
% LSUN Cat & $3.86 \pm 0.12$ & $\mathbf{3.41 \pm 0.09}$ \\
% FFHQ        & $2.70 \pm 0.08$ & $\mathbf{2.45 \pm 0.06}$ \\
% \bottomrule
% \end{tabular}
% \end{table}

\section{Conclusion and Discussion}
In this work, we have addressed the fundamental instability of minimax optimization through the lens of continuous-time dynamical systems. We identified that the persistent limit cycles and divergence often observed in adversarial learning stem from the problematic eigenstructure of the loss function’s Hessian near equilibrium points. To overcome these inherent structural limitations, we introduced the Controlled-Minimax framework. Our approach leverages control theory to derive a specific feedback controller that actively steers the optimization trajectory. 

Theoretical analysis confirms that this framework guarantees local exponential stability regardless of the underlying Hessian spectrum, effectively stabilizing problems where standard gradient descent-ascent methods fail. Empirically, our extensive validation ranging from pathological synthetic benchmarks like the Dirac-GAN, 2D Gaussian and Swiss Roll to high-dimensional image synthesis with StyleGAN2-ada demonstrates that our Controlled-Minimax not only suppresses oscillatory behavior but also ensures robust and rapid convergence. Collectively, these results bridge the gap between control theory and deep learning, offering a principled solution to one of the most persistent challenges in adversarial training.

Several promising directions remain for future exploration. The current framework relies on fixed control parameters which guarantee local stability; however, designing an adaptive controller that adjusts to the local geometry could offer a pathway toward global stability guarantees. Furthermore, acknowledging the inherent discrepancy between continuous ODE models and iterative algorithms, future work should rigorously address discrete-time stability. Establishing stability conditions directly for the discretized update rules will be essential to fully reconcile the theoretical control perspective with practical algorithmic behavior.
% In the unusual situation where you want a paper to appear in the
% references without citing it in the main text, use \nocite
\nocite{langley00}

\section*{Broader Impact}

This paper presents work whose goal is to advance the field of machine learning. There are many potential societal consequences of our work, none of which we feel must be specifically highlighted here.

%%%%%%%%%%%%%%%%%%%%%%%%%%%%%%%%%%%%%%%%%%%%%%%%%%%%%%%%%%%%%%%%%%%%%%%%%%%%%%%
%%%%%%%%%%%%%%%%%%%%%%%%%%%%%%%%%%%%%%%%%%%%%%%%%%%%%%%%%%%%%%%%%%%%%%%%%%%%%%%
% APPENDIX
%%%%%%%%%%%%%%%%%%%%%%%%%%%%%%%%%%%%%%%%%%%%%%%%%%%%%%%%%%%%%%%%%%%%%%%%%%%%%%%
%%%%%%%%%%%%%%%%%%%%%%%%%%%%%%%%%%%%%%%%%%%%%%%%%%%%%%%%%%%%%%%%%%%%%%%%%%%%%%%
\newpage
\appendix
\onecolumn
\section{Proof of Theorem \ref{thm:local-stability}}
\begin{proof}
Consider the quadratic Lyapunov function
\[
    V(v) = \frac12\big(\|\dot\theta\|^2+\|\dot\phi\|^2\big)
         = \frac12\|v\|^2.
\]
Along trajectories of the linear system~\eqref{eq:velocity-linear} we have
\[
    \dot V = v^\top \dot v = v^\top M v.
\]
Writing $v=(\dot\theta,\dot\phi)$ in block form yields
\begin{align}
    \dot V
    &=
    \dot\theta^\top(-H_{\theta\theta}^\ast-\lambda_\theta I)\dot\theta
    +\dot\theta^\top(-H_{\theta\phi}^\ast+\beta A_c)\dot\phi \nonumber\\
    &\quad
    +\dot\phi^\top(H_{\phi\theta}^\ast-\beta A_c^\top)\dot\theta
    +\dot\phi^\top(H_{\phi\phi}^\ast-\lambda_\phi I)\dot\phi.
    \label{eq:dV-expand}
\end{align}

\emph{Diagonal terms.}
By Assumption~\ref{ass:bounded-hessian}, and using the induced matrix norm, we have
\[
    \|\dot\theta^\top H_{\theta\theta}^\ast\dot\theta\|
    \le \|H_{\theta\theta}^\ast\|\,\|\dot\theta\|^2
    \le L\|\dot\theta\|^2,
\]
and similarly for the $\phi$-block. Hence
\begin{align*}
    \dot\theta^\top(-H_{\theta\theta}^\ast-\lambda_\theta I)\dot\theta
    &\le (L-\lambda_\theta)\|\dot\theta\|^2,\\
    \dot\phi^\top(H_{\phi\phi}^\ast-\lambda_\phi I)\dot\phi
    &\le (L-\lambda_\phi)\|\dot\phi\|^2.
\end{align*}

\emph{Cross terms.}
Define the cross contribution
\[
    T_{\text{cross}}
    := \dot\theta^\top(-H_{\theta\phi}^\ast + \beta A_c)\dot\phi
       + \dot\phi^\top(H_{\phi\theta}^\ast - \beta A_c^\top)\dot\theta.
\]
Expanding gives
\[
    T_{\text{cross}}
    = - \dot\theta^\top H_{\theta\phi}^\ast \dot\phi
      + \beta \dot\theta^\top A_c \dot\phi
      + \dot\phi^\top H_{\phi\theta}^\ast \dot\theta
      - \beta \dot\phi^\top A_c^\top \dot\theta.
\]
Since $\dot\phi^\top A_c^\top\dot\theta = \dot\theta^\top A_c\dot\phi$, 
the antisymmetric controller terms cancel exactly, i.e.,
\[
    \beta\dot\theta^\top A_c\dot\phi
    - \beta\dot\phi^\top A_c^\top\dot\theta = 0,
\]
and hence
\[
    T_{\text{cross}}
    = - \dot\theta^\top H_{\theta\phi}^\ast \dot\phi
      + \dot\phi^\top H_{\phi\theta}^\ast \dot\theta.
\]

Again using Assumption~\ref{ass:bounded-hessian} and Cauchy--Schwarz,
\[
    |T_{\text{cross}}|
    \le \big(\|H_{\theta\phi}^\ast\|+\|H_{\phi\theta}^\ast\|\big)
        \|\dot\theta\|\,\|\dot\phi\|
    \le 2L\,\|\dot\theta\|\,\|\dot\phi\|.
\]

\emph{Combining bounds.}
Substituting these estimates into~\eqref{eq:dV-expand}, we obtain
\[
    \dot V
    \le (L-\lambda_\theta)\|\dot\theta\|^2
       + (L-\lambda_\phi)\|\dot\phi\|^2
       + 2L\|\dot\theta\|\,\|\dot\phi\|.
\]
Using the inequality $2ab\le a^2+b^2$ for all $a,b\ge 0$, we bound
\[
    2L\|\dot\theta\|\,\|\dot\phi\|
    \le L\|\dot\theta\|^2 + L\|\dot\phi\|^2,
\]
and thus
\[
    \dot V
    \le (2L-\lambda_\theta)\|\dot\theta\|^2
       + (2L-\lambda_\phi)\|\dot\phi\|^2.
\]
If~\eqref{eq:lambda-condition} holds, then
\[
    \gamma := \min\{\lambda_\theta-2L,\ \lambda_\phi-2L\} > 0,
\]
and we have
\[
    \dot V \le -\gamma\big(\|\dot\theta\|^2+\|\dot\phi\|^2\big)
             = -2\gamma V.
\]
Integrating yields
\[
    V(t) \le V(0)e^{-2\gamma t},\qquad
    \|v(t)\| \le e^{-\gamma t}\,\|v(0)\|,
\]
which shows exponential stability of $v=0$ for the linear system~\eqref{eq:controlled-system}.

For the nonlinear system~\eqref{eq:controlled-system}, the closed-loop dynamics are $C^1$ and locally Lipschitz by Assumption~\ref{ass:well-posed} and the linear controller~\eqref{eq:controller}. The Jacobian of the nonlinear system at $(\theta^\ast,\phi^\ast,0,0)$ coincides with the matrix $M$ in~\eqref{eq:M-matrix}. Since the linearization is exponentially stable, standard nonlinear stability theory implies that the equilibrium $(\theta^\ast,\phi^\ast,0,0)$ of the nonlinear system is locally exponentially stable. 
\end{proof}

% You can have as much text here as you want. The main body must be at most $8$
% pages long. For the final version, one more page can be added. If you want, you
% can use an appendix like this one.

% The $\mathtt{\backslash onecolumn}$ command above can be kept in place if you
% prefer a one-column appendix, or can be removed if you prefer a two-column
% appendix.  Apart from this possible change, the style (font size, spacing,
% margins, page numbering, etc.) should be kept the same as the main body.
%%%%%%%%%%%%%%%%%%%%%%%%%%%%%%%%%%%%%%%%%%%%%%%%%%%%%%%%%%%%%%%%%%%%%%%%%%%%%%%
%%%%%%%%%%%%%%%%%%%%%%%%%%%%%%%%%%%%%%%%%%%%%%%%%%%%%%%%%%%%%%%%%%%%%%%%%%%%%%%

\end{document}